\DeclareMathOperator*{\diag}{Diag\,}
\newtheorem{lemma}{Lemma}
\newtheorem{theorem}{Theorem}
\newtheorem{corollary}{Corollary}
\newtheorem{assumption}{Assumption}
\newtheorem{definition}{Definition}
\newtheorem{remark}{Remark}
\title{D-SPIDER-SFO: A Decentralized Optimization Algorithm with Faster Convergence Rate for Nonconvex Problems}
\author{\textbf{Taoxing Pan\textsuperscript{\rm 1}, Jun Liu\textsuperscript{\rm 2}, Jie Wang\textsuperscript{\rm 1}\thanks{Corresponding author}}\\ 
\textsuperscript{\rm 1} University of Science and Technology of China,
\textsuperscript{\rm 2} Infinia ML, Inc.\\
tx1997@mail.ustc.edu.cn\\
jun.liu@infiniaml.com\\
jiewangx@ustc.edu.cn 
}
\begin{document}

\maketitle

\begin{abstract}
    Decentralized optimization algorithms have attracted intensive interests recently, as it has a balanced communication pattern, especially when solving large-scale machine learning problems. Stochastic Path Integrated Differential Estimator Stochastic First-Order method (SPIDER-SFO) nearly achieves the algorithmic lower bound in certain regimes for nonconvex problems. However, whether we can find a decentralized algorithm which achieves a similar convergence rate to SPIDER-SFO is still unclear. To tackle this problem, we propose a decentralized variant of SPIDER-SFO, called \textbf{d}ecentralized SPIDER-SFO (D-SPIDER-SFO). We show that D-SPIDER-SFO achieves a similar gradient computation cost---that is,  $\mathcal{O}(\epsilon^{-3})$ for finding an $\epsilon$-approximate first-order stationary point---to its centralized counterpart. To the best of our knowledge, D-SPIDER-SFO achieves the state-of-the-art performance for solving nonconvex optimization problems on decentralized networks in terms of the computational cost. Experiments on different network configurations demonstrate the efficiency of the proposed method.
\end{abstract}

\section{Introduction}

Distributed optimization is a popular technique for solving large scale machine learning problems \cite{li2014scaling}, ranging from visual object recognition \cite{huang2017densely,he2016deep} to natural language processing \cite{vaswani2017attention,devlin2018bert}. For distributed optimization, a set of workers form a connected computational network, and each worker is assigned a portion of the computing task. The centralized network topology, like parameter server \cite{chen2016revisiting,dean2012large,li2014scaling,NIPS2010_4006}, consists of a central worker connected with all other workers. This communication mechanism could degrade the performance significantly in scenarios where the underlying network has low bandwidth or high latency \cite{lian2017can}.

In contrast, the decentralized network topology offers better network load balance---as all nodes in the network only communicate with their neighbors instead of the central node---which implies that they may be able to outperform their centralized counterparts. These motivate many works on decentralized algorithms. \citet{nedic2009distributed} studied distributed subgradient method for optimizing a sum of convex objective functions. \citet{shi2014admm} analyzed the linear convergence rate of the ADMM in decentralized consensus optimization. \citet{yuan2016convergence} studied the convergence properties of the decentralized gradient descent method (DGD). They proved that the local solutions and the mean solution converge to a neighborhood of the global minimizer at a linear rate for strongly convex problems. \citet{mokhtari2016dsa} studied decentralized double stochastic averaging gradient algorithm (DSA) and \citet{shi2015extra} proposed decentralized exact first-order algorithm (EXTRA). Both of these two algorithms converge to an optimal solution at a linear rate for strongly convex problems. \citet{lian2017can} studied decentralized PSGD (D-PSGD) and showed that decentralized algorithms could be faster than their centralized counterparts. \citet{tang2018d} proposed D$^2$ algorithm which is less sensitive to the data variance across workers. \citet{scaman2019opt} provided two optimal decentralized algorithms, called multi-step primal-dual (MSPD) and distributed randomized smoothing (DRS), and their corresponding optimal convergence rate for convex problems in certain regimes. \citet{assran2019stochastic} proposed Stochastic Gradient Push (SGP) and proved that SGP converges to a stationary point of smooth and nonconvex objectives at the sub-linear rate.

On the other hand, to achieve a faster convergence rate, researchers have also proposed many nonconvex optimization algorithms.
Stochastic Gradient Descent (SGD) \cite{robbins1951stochastic} achieves an $\epsilon$-approximate stationary point with a gradient cost of $\mathcal{O}(\epsilon^{-4})$ \cite{ghadimi2013stochastic}.
To improve the convergence rate of SGD, researchers have proposed variance-reduction methods \cite{roux2012stochastic,defazio2014saga}.  
Specifically, the finite-sum Stochastic Variance Reduced Gradient method (SVRG) \cite{johnson2013accelerating,Reddi2016} and online Stochastically Controlled Stochastic Gradient method (SCSG) \cite{lei2017non} achieve a gradient cost of $\mathcal{O}(\min(m^{2/3}\epsilon^{-2}, \epsilon^{-10/3}))$, where $m$ is the number of samples.
SNVRG \cite{zhou2018stochastic} achieves a gradient cost of $\mathcal{\widetilde{O}}(\epsilon^{-3})$, while SPIDER-SFO \cite{fang2018spider} and SARAH \cite{nguyen2017s,nguyen2019f} achieve a gradient cost of $\mathcal{O}( \epsilon^{-3})$. Moreover, \citet{fang2018spider} showed that SPIDER-SFO nearly achieves the algorithmic lower bound in certain regimes for nonconvex problems.  Though these works have made significant progress, convergence properties of faster optimization algorithms for nonconvex problems in \emph{the decentralized settings} are unclear.

In this paper, we propose \textbf{d}ecentralized SPIDER-SFO (D-SPIDER-SFO) for faster convergence rate for nonconvex problems. We theoretically analyze that D-SPIDER-SFO achieves an $\epsilon$-approximate stationary point in gradient cost of $\mathcal{O}(\epsilon^{-3})$, which achieves the state-of-the-art performance for solving nonconvex optimization problems in the decentralized settings. Moreover, this result indicates that D-SPIDER-SFO achieves a similar gradient computation cost to its centralized competitor, called centralized SPIDER-SFO (C-SPIDER-SFO). To give a quick comparison of our algorithm and other existing first-order algorithms for nonconvex optimization in the decentralized settings, we summarize the gradient cost and communication complexity of the most relevant algorithms in Table\ref{table:computing_cost_comparison}. Table \ref{table:computing_cost_comparison} shows that D-SPIDER-SFO converges faster than D-PSGD and D$^2$ in terms of the gradient computation cost. Moreover, compared with C-SPIDER-SFO, D-SPIDER-SFO reduces much communication cost on the busiest worker. Therefore, D-SPIDER-SFO can outperform C-SPIDER-SFO when the communication becomes the bottleneck of the computational network. Our main contributions are as follows.
\begin{enumerate}
    \item We propose D-SPIDER-SFO for finding approximate first-order stationary points for nonconvex problems in the decentralized settings, which is  a decentralized parallel version of SPIDER-SFO.
    \item We theoretically analyze that D-SPIDER-SFO achieves the gradient computation cost of $\mathcal{O}( \epsilon^{-3})$ to find an $\epsilon$-approximate first-order stationary point, which is similar to SPIDER-SFO in the centralized network topology. To the best of our knowledge, D-SPIDER-SFO achieves the state-of-the-art performance for solving nonconvex optimization problems in the decentralized settings.
\end{enumerate}
\textbf{Notation}: Let $\|\cdot\|$ be the vector and the matrix $\ell_2$ norm and $\|\cdot\|_F$ be the matrix Frobenius norm. $\nabla f(\cdot)$ denotes the gradient of a function $f$. Let $\mathbf{1}_n$ be the column vector in $\mathbb{R}^n$ with $1$ for all elements and $e_i$ be the column vector with a 1 in the $i$th coordinate and 0's elsewhere. We denote by $f^*$ the optimal solution of $f$. For a matrix $A\in \mathbb{R}^{n\times n}$, let $\lambda_i(A)$ be the $i$-th largest eigenvalue of a matrix. For any fixed integer $j\geq i\geq 0$, let $[i:j]$ be the set $\{i,i+1, \dots, j\}$ and $\{x\}_{i:j}$ be the sequence $\{x_i, x_{i+1}, \dots, x_j\}$.

 \begin{table*}
     \centering
     \caption{Comparision of D-PSGD, D$^2$ and D-SPIDER-SFO and their centralized competitors.}
     \begin{tabular}{c|c|c|c}
         \toprule
         {Algorithm} & {Communication cost on} &{Gradient}&{Bounded data}\\
         &{the busiest node} & {Computation Cost} & {variance among workers}\\
         \hline 
         {C-PSGD \cite{dekel2012optimal}} & {$\mathcal{O}(n)$} &  {$\mathcal{O}(\epsilon^{-4})$}& {$\times$}\\
         \hline 
         {D-PSGD \cite{lian2017can}} & {$\mathcal{O}$ $($Deg$($network$))$}  & {$\mathcal{O}(\epsilon^{-4})$}& {need}\\
         \hline
         {D$^2$\cite{tang2018d}} & {$\mathcal{O}$ $($Deg$($network$))$}  & {$\mathcal{O}(\epsilon^{-4})$}& {no need} \\
         \hline
         {C-SPIDER-SFO \cite{fang2018spider}} & {$\mathcal{O}(n)$} & {$\mathcal{O}( \epsilon^{-3})$}& {$\times$}\\
         \hline 
         {D-SPIDER-SFO} & {$\mathcal{O}$ $($Deg$($network$))$} & {$\mathcal{O}( \epsilon^{-3})$}& {no need}\\
         \bottomrule
     \end{tabular}
     \label{table:computing_cost_comparison}
 \end{table*}

\section{Basics and Motivation}

\subsection{Decentralized Optimization Problems}
In this section, we briefly review some basics of the decentralized optimization problem. We represent the decentralized communication topology with a weighted directed graph: $(V,W)$. $V$ is the set of all computational nodes, that is, $V:=\{1,2,\dots,n\}.$ $W$ is a matrix and $W_{i,j}$ represents how much node $j$ can affect node $i$, while $W_{ij}=0$ means that node $i$ and $j$ are disconnected. Therefore, $W_{ij} \in [0,1],$ for all $i,j$. Moreover, in the decentralized optimization settings, we assume that $W$ is symmetric and doubly stochastic, which means that $W$ satisfies (i) $W_{ij} = W_{ji}$ for all $i,j$, and (ii) $\sum_{j} W_{ij}=1$ for all $i$ and $\sum_{i} W_{ij} =1$ for all $j$.

Throughout this paper, we consider the following decentralized optimization problem:
\begin{align}\label{problem}
\min_{x\in \mathbb{R}^N} f(x) :=\frac{1}{n}\sum_{i=1}^{n} \underbrace{\mathbb{E}_{\xi\in \mathcal{D}_i} F_i(x; \xi)}_{=:f_i(x)},
\end{align}
where $n$ is the number of workers, $\mathcal{D}_i$ is a predefined distribution of the local data for worker $i$, and $\xi$ is a random data sample. Decentralized problems require that the graph of the computational network is connected and each worker can only exchange local information with its neighbors.

In the $i$-th node, $x_i, \xi_i, f_i(x_i), F_i(x_i;\xi_i)$ is the local optimization variables, random sample, target function and stochastic component function. 
Let $\mathcal{S}$ be a subset that samples $S$ elements in the dataset. For simplicity, we denote by $\xi_{k,i}$ the subset that $i$-th node samples at iterate $k$, that is, $\nabla F_i(x_{k,i};\xi_{k,i}) = \nabla F_{i}(x_{k,i};\mathcal{S}_{k,i}) = \frac{1}{S_{k,i}}\sum_{\xi_j\in \mathcal{S}_{k,i}} \nabla F_i(x_{k,i}; \xi_j)$. In order to present the core idea more clearly, at iterate $k$, we define the concatenation of all local optimization variables, estimators of full gradients, stochastic gradients, and full gradients by matrix $X_k, G_k,\partial F(X_k;\xi_k), \partial f(X_k) \in \mathbb{R}^{N\times n}$ respectively:
\begin{align*}
    X_k & :=[x_{k,1},\ \ \cdots \ \ ,x_{k,n}],\\
    G_k & := [g_{k,1},\ \ \cdots\ \ ,g_{k,n}],\\
    \partial F(X_k,\xi_k) & :=[\nabla F_1(x_{k,1};\xi_{k,1}),\ \ \cdots\ \  ,\nabla F_n(x_{k,n};\xi_{k,n})],\\
    \partial f(X_k) & :=[\nabla f_1(x_{k,1}),\ \ \cdots\ \ ,\nabla f_n(x_{k,n})].
\end{align*}

In general, at iterate $k$, let the stepsize be $\eta_k$. We define $\eta_k V_k$ as the update, where $V_k \in \mathbb{R}^{N\times n}$. Therefore, we can view the update rule as:
\begin{align}\label{general_update_rule}
X_{k+1} \gets X_k W  - \eta_k V_k.
\end{align}

\section{D-SPIDER-SFO}

In this section, we introduce the basic settings, assumptions, and the flow of D-SPIDER-SFO in the first subsection. Then, we compare D-SPIDER-SFO with D-PSGD and D$^2$ in a special scenario to show our core idea. In the final subsection, we propose the error-bound theorems for finding an $\epsilon$-approximate first-order stationary point.
\begin{algorithm}
    \caption{D-SPIDER-SFO on the $i$th node}
    \label{alg:my_algorithm}
    \begin{algorithmic}
        \State {\bfseries Input:} Require initial point $X_0$,  weighted matrix $W$, number of iterations $K$, learning rate $\eta$, constant $q$, and two sample sizes $S^{(1)}$ and $S^{(2)}$\\
        {\bfseries Initialize:} $X_{-1} = X_0$, $G_{-1}$ = $0$
        \For{$k = 0, \dots, K-1$}
        \If {mod$(k,q)=0$}
        \State Draw $S^{(1)}$ samples and compute the stochastic gradient $\nabla F_{i}(x_{k,i};\mathcal{S}^{(1)}_{k,i})$ 
        \State $g_{k,i} = \nabla F_{i}(x_{k,i};\mathcal{S}^{(1)}_{k,i})$
        \State $x_{k+\frac{1}{2}, i} = 2x_{k,i} - x_{k-1,i} - \eta (g_{k,i} - g_{k-1,i})$
        \Else
        \State Draw $S^{(2)}$ samples, and compute two stochastic gradient $\nabla F_{i}(x_{k,i};\mathcal{S}^{(2)}_{k,i})$ and $\nabla F_{i}(x_{k-1,i};\mathcal{S}^{(2)}_{k,i})$
        \State $g_{k,i} = \nabla F_{i}(x_{k,i};\mathcal{S}^{(2)}_{k,i}) - \nabla F_{i}(x_{k-1,i};\mathcal{S}^{(2)}_{k,i}) + g_{k-1,i}$
        \State $x_{k+\frac{1}{2}, i} = 2x_{k,i} - x_{k-1,i} -\eta (\nabla F_{i}(x_{k,i};\mathcal{S}^{(2)}_{k,i}) - \nabla F_{i}(x_{k-1,i};\mathcal{S}^{(2)}_{k,i}))$
        \EndIf
        \State $x_{k+1,i}=\sum_{j=1}^n W_{j,i} x_{k+\frac{1}{2},j}$
        \EndFor
        \State Return $\widetilde{x} = \frac{X_K\mathbf{1}_n}{n}$ 
    \end{algorithmic}
\end{algorithm}

\subsection{Settings and Assumptions}

In this subsection, we introduce the formal definition of an $\epsilon$-approximate first-order stationary point and commonly used assumptions for decentralized optimization problems. Moreover, we briefly introduce the key steps at iterate $k$ for worker $i$ in D-SPIDER-SFO algorithm.

\begin{definition}
    We call $\widetilde{x}\in \mathbb{R}^N$ an $\epsilon$-approximate first-order stationary point, if 
    \begin{align}
    \|\nabla f(\widetilde{x})\|\leq \epsilon.
    \end{align}
\end{definition}

\begin{assumption} \label{assumption_SPIDER}
    We make the following commonly used assumptions for the convergence analysis.
    \begin{enumerate}
        \item \textbf{Lipschitz gradient: } All local loss functions $f_i(\cdot)$ have $L$-Lipschitzian gradients.
        \item \textbf{Average Lipschitz gradient: } In each fixed node $i$, the component function $F_i(x_i;\xi_i)$ has an average L-Lipschitz gradient, that is,
        $$\mathbb{E}\|\nabla F_i(x;\xi_i) - \nabla F_i(y;\xi_i)\|^2\leq L^2\|x - y\|^2,\forall x,y.$$
        \item \textbf{Spectral gap:} Given the symmetric doubly stochastic matrix $W$. Let the eigenvalues of $W\in \mathbb{R}^{n\times n}$ be $\lambda_1\geq \lambda_2\geq \cdots \geq \lambda_n$. We denote by $\lambda$ the second largest value of the set of eigenvalues, i.e., $$\lambda = \max_{i\in \{2,\cdots,n\}} \lambda_i = \lambda_2.$$ We assume $\lambda<1$ and $\lambda_n >-\frac{1}{3}$.
        \item \textbf{Bounded variance: } Assume the variance of stochastic gradient within each worker is bounded, which implies there exists a constant $\sigma$, such that
        $$\mathbb{E}_{\xi\sim \mathcal{D}_i}\|\nabla F_i(x;\xi) - \nabla f_i(x)\|^2\leq \sigma^2, \forall i, \forall x.$$
        \item (For D-PSGD Algorithm only) \textbf{Bounded data variance among workers: } Assume the variance of full gradient among all workers is bounded, which implies that there exists a constant $\zeta$, such that
        $$\mathbb{E}_{i \sim \mathcal{U}([n])}\|\nabla f_i(x) - \nabla f(x)\|^2\leq \zeta^2, \forall i, \forall x. $$
    \end{enumerate}
\end{assumption}
\begin{remark}
    The eigenvalues of $W$ measure the speed of information spread across the network \cite{lian2017can}. D-SPIDER-SFO requires $\lambda_2<1$ and $\lambda_n>-\frac{1}{3}$, which is the same as the assumption in D$^2$ \cite{tang2018d}, while D-PSGD only needs the former condition. D-PSGD needs bounded data variance among workers assumption additionally, as it is sensitive to such data variance.
\end{remark}

D-SPIDER-SFO algorithm is a synchronous decentralized parallel algorithm. Each node repeats these four key steps at iterate $k$ concurrently: 

\begin{enumerate}
    \item Each node computes a local stochastic gradient on their local data. When $\mod(k,q) \not= 0$, all nodes compute $\nabla F_{i}(x_{k,i};\mathcal{S}^{(2)}_{k,i})$ and $\nabla F_{i}(x_{k-1,i};\mathcal{S}^{(2)}_{k,i})$ using the local models at both iterate $k$ and the last iterate; otherwise, they compute $\nabla F_{i}(x_{k,i};\mathcal{S}^{(1)}_{k,i})$.
    \item Each node updates its local estimator of the full gradient $g_{k,i}$. When $\mod(k,q) \not= 0$, all nodes compute $g_{k,i} \gets \nabla F_{i}(x_{k,i};\mathcal{S}^{(2)}_{k,i}) - \nabla F_{i}(x_{k-1,i};\mathcal{S}^{(2)}_{k,i}) + g_{k-1,i}$; else they compute $g_{k,i} \gets \nabla F_{i}(x_{k,i};\mathcal{S}^{(1)}_{k,i})$.
    \item Each node updates their local model. When $\mod(k,q) \not=0$, all nodes compute $x_{k+\frac{1}{2}, i} \gets 2x_{k,i} - x_{k-1,i} -\eta (\nabla F_{i}(x_{k,i};\mathcal{S}^{(2)}_{k,i}) - \nabla F_{i}(x_{k-1,i};\mathcal{S}^{(2)}_{k,i}))$; else they compute $x_{k+\frac{1}{2}, i} \gets 2x_{k,i} - x_{k-1,i} - \eta (g_{k,i} - g_{k-1,i})$.
    \item When meeting the synchronization barrier, each node takes weighted average with its and neighbors' local optimization variables:
    $x_{k+1,i}=\sum_{j=1}^n W_{j,i} x_{k+\frac{1}{2},j}.$
\end{enumerate}

To understand D-SPIDER-SFO, we consider the update rule of global optimization variable $\frac{X_k\mathbf{1}_n}{n}$. Let $k_0 = \lfloor k/q\rfloor \cdot q$. For convenience, we define 
\begin{align*}
    \overline{\Delta}_k &= \frac{(X_{k+1} - X_k)\mathbf{1}_n}{n} = \frac{1}{n} \sum_{i=1}^n (x_{k+1,i} - x_{k,i}),\\
    \overline{H}_k(X) &= \partial F(X;\xi_k)\frac{\mathbf{1}_n}{n}=\frac{1}{n} \sum_{i=1}^n \nabla F_i(x_{i};\xi_{k,i}),
\end{align*}
where $\xi_k$ denotes the samples at the $k$-th iterate. Therefore,
\begin{align*}
    \overline{\Delta}_{k} 
    =&\overline{\Delta}_{k-1} - \eta(\overline{H}_k(X_k) - \overline{H}_{k}(X_{k-1}))\notag\\
    =&-\eta \overline{H}_{k_0}(X_{k_0}) - \eta \sum_{s=k_0+1}^k(\overline{H}_s(X_s) - \overline{H}_{s}(X_{s-1})).
\end{align*}
As for centralized SPIDER-SFO, we have
\begin{align*}
    x_{k+1} - x_{k}
    =& -\eta_k(\nabla F(x_k;\xi_k) - \nabla F(x_{k-1}; \xi_{k}) + v_{k-1})\notag\\
    =& - \eta_{k_0} \nabla F(x_{k_0};\xi_{k_0}) \notag - \sum_{s=k_0+1}^k \eta_{s}(\nabla F(x_s;\xi_s) - \nabla F(x_{s-1};\xi_{s})).
\end{align*}

\begin{remark}
    Nguyen et al. propose SARAH for (strongly) convex optimization problems. SPIDER-SFO adopts a similar recursive stochastic gradient update framework and nearly matches the algorithmic lower bound in certain regimes for nonconvex problems. Moreover, Wang et al. [2] propose SpiderBoost and show that SpiderBoost, a variant of SPIDER-SFO with fixed step size, achieves a similar convergence rate to SPIDER-SFO for nonconvex problems. Inspired by these algorithms, we propose decentralized SPIDER-SFO (D-SPIDER-SFO). As we can see, the update rule of D-SPIDER-SFO is similar to its centralized counterpart with fixed step size.
\end{remark}

\subsection{Core Idea}

The convergence property of decentralized parallel stochastic algorithms is related to the variance of stochastic gradients and the data variance across workers. In this subsection, we present in detail the underlying idea to reduce the gradient complexity behind the algorithm design.

The general update rule \eqref{general_update_rule} shows that $\mathbb{E}[\left\|V_k\right\|^2_F]$ affects the convergence, especially when we approach a solution. For showing the improvement of D-SPIDER-SFO, we will compare the upper bound of $\mathbb{E}[\|V_k\|^2_F]$ of three algorithms, which are D-PSGD, D$^2$, and D-SPIDER-SFO.

The update rule of D-PSGD is $X_{k+1} = X_{k} W - \eta \partial F(X_k; \xi_k)$, that is, $V_k = \partial F(X_k; \xi_k)$. Then, we have
{ 
\begin{align*}
    \mathbb{E}\|\partial F(X_k; \xi_k)\|_F^2 \leq &4\mathbb{E}\|\partial F(X_k;\xi_k) - \partial f(X_k)\|_F^2 + 4\mathbb{E}\left\|\partial f(X_k) - \partial f\left(\frac{X_k\mathbf{1}_n}{n} \mathbf{1}_n^T\right)\right\|_F^2 \\
    &+ 4\mathbb{E}\left\|\partial f\left(\frac{X_k\mathbf{1}_n}{n} \mathbf{1}_n^T\right) - \nabla f\left(\frac{X_k\mathbf{1}_n}{n}\right)\mathbf{1}_n^T\right\|_F^2 + 4\mathbb{E}\left\|\nabla f\left(\frac{X_k\mathbf{1}_n}{n}\right)\mathbf{1}_n^T\right\|_F^2\\
    \leq & 4n\sigma^2 + 4n\zeta^2 +4L^2\sum_{i=1}^n\left\|x_{k,i} - \frac{X_k\mathbf{1}_n}{n}\right\|^2 + 4\mathbb{E}\left\|\nabla f\left(\frac{X_k\mathbf{1}_n}{n}\right)\mathbf{1}_n^T\right\|_F^2.
\end{align*}
}
Moreover, the update rule of D$^2$ is $X_{k+1} = \left[2 X_k - X_{k-1} - \eta(\partial F(X_k;\xi_k) - \partial F(X_{k-1};\xi_{k-1}))\right] W$. For convenience, we define $Q_k = \frac{X_k - X_{k-1}}{\eta}$. Therefore, we can conclude the upper bound of $\mathbb{E}\|V_k\|_F^2$.
 \begin{align*}
    \mathbb{E}\left\|V_k\right\|^2_F = &\mathbb{E}\left\|\left[-Q_k + (\partial F(X_k; \xi_k) - \partial F(X_{k-1};\xi_{k-1}))\right]W\right\|^2_F\\
    \leq & 2 \mathbb{E} \left\|Q_k \right\|^2_F
    + 2\mathbb{E}\|\partial F(X_k; \xi_k) - \partial F(X_{k-1};\xi_{k-1})\|^2_F\\
    \leq & 2 \mathbb{E} \left\|Q_k\right\|^2_F + 6 \mathbb{E}\|\partial F(X_k;\xi_k) - \partial f(X_k)\|^2_F\\
    &+ 6 \mathbb{E}\|\partial F(X_{k-1};\xi_{k-1}) - \partial f(X_{k-1})\|^2_F + 6\mathbb{E}\|\partial f(X_{k}) - \partial f(X_{k-1})\|^2_F\\
    \leq & \frac{2}{\eta^2} \sum_{i=1}^n\mathbb{E} \left\|x_{k,i} - x_{k-1,i}\right\|^2 + 6\sum_{i=1}^n \mathbb{E}\|\nabla F_i(x_{k,i}; \xi_{k,i}) - \nabla f_i(x_{k,i})\|^2\\
    &+6\sum_{i=1}^n \mathbb{E}\|\nabla F_i(x_{k-1,i}; \xi_{k-1,i}) - \nabla f_i(x_{k-1,i})\|^2 +6\sum_{i=1}^n \mathbb{E}\|\nabla f_i(x_{k,i}) - \nabla f_i(x_{k-1,i})\|^2\\
    \leq & 2\left(\eta^{-2} + 3L^2 \right)\sum_{i=1}^n \mathbb{E}\|x_{k,i} - x_{k-1,i}\|^2 + 12n\sigma^2.
\end{align*}

Since the update rule of D-SPIDER-SFO has two different patterns, we discuss them seperately. If $\mod(k,q)\not=0$, we have $V_k = [-Q_k - (\partial F(X_k; \xi_k) - \partial F(X_{k-1};\xi_{k}))]W$. 
\begin{align*}
    \mathbb{E}\left\|V_k\right\|^2_F = &\mathbb{E}\left\|\left[Q_k + (\partial F(X_k; \xi_k) - \partial F(X_{k-1};\xi_{k}))\right]W\right\|^2_F\\
    \leq & 2 \mathbb{E} \left\|Q_k \right\|^2_F + 2\mathbb{E}\|\partial F(X_k; \xi_k) - \partial F(X_{k-1};\xi_{k})\|^2_F\\
    \leq & \frac{2}{\eta^2} \sum_{i=1}^n\mathbb{E} \left\|x_{k,i} - x_{k-1,i}\right\|^2 + 2\sum_{i=1}^n \mathbb{E}\|\nabla F_i(x_{k,i}; \xi_{k,i}) - \nabla F_i(x_{k-1,i};\xi_{k,i})\|^2_F\\
    \leq & 2\left(\eta^{-2} + L^2 \right)\sum_{i=1}^n \mathbb{E}\|x_{k,i} - x_{k-1,i}\|^2.
\end{align*}
If $\mod(k,q) = 0$ and $k>0$, we have $V_k = [-Q_k - (\partial F(X_k; \xi_k) - G_{k-1})]W$. Let $k_0 = k - q$, and we have 
\begin{align} \label{ineq_v_k_of_D-SPIDER}
    \mathbb{E}\left\|V_k\right\|^2_F = &\mathbb{E}\left\|\left[-Q_k - (\partial F(X_k; \xi_k) - G_{k-1})\right]W\right\|^2_F \notag\\
    \leq &2\mathbb{E}\|Q_k\|^2_F + 2\mathbb{E}\|\partial F(X_k; \xi_k) - G_{k-1}\|^2_F \notag\\
    \leq &2\mathbb{E}\|Q_k\|^2_F+ 4\mathbb{E}\|\partial F(X_k; \xi_k) - \partial F(X_{k_0};\xi_{k_0})\|^2_F +4\mathbb{E}\left\|\sum_{j=k_0}^{k-2}(\partial F(X_{j+1}; \xi_{j+1}) - \partial F(X_{j};\xi_{j+1}) \right\|^2_F,
\end{align}
where 
\begin{align*}
    G_{k-1} = & \sum_{j=k_0}^{k-2}  \left( \partial F(X_{j+1}; \xi_{j+1}) - \partial F(X_{j};\xi_{j+1})\right) + \partial F(X_{k_0};\xi_{k_0}).
\end{align*}

Assume that for any $j\in [k_0:k]$, $X_j$ has achieved the optimum $X^*:= x^*\mathbf{1}_n^T$ with all local models equal to the optimum $x^*$. Then, $\mathbb{E}[\left\|V_k\right\|_F^2]$ of D-PSGD, and D$^2$, is bounded by $\mathcal{O}(\sigma^2 + \zeta^2)$, $\mathcal{O}(\sigma^2)$, which is similar to \citet{tang2018d}. For convenience, considering the finite-sum case, if we set the batch size $S_1$ equal to the size $m$ of the dataset, that is, we compute the full gradient at iteration $k$ and $k_0$. Moreover, as for any $j\in [k_0: k]$, $X_j=X^*$, then each term of \eqref{ineq_v_k_of_D-SPIDER} is zero, that is, $\mathbb{E}[\left\|V_k\right\|_F^2]$ is bounded by zero. 
D-SPIDER-SFO will stop at the optimum, while D-PSGD and D$^2$ will escape from the optimum because of the variance of stochastic gradients or data variance across workers. If we need D$^2$ stops at the optimum, D$^2$ should compute the full gradient at each iteration, which is similar to EXTRA \cite{shi2015extra}, while D-SPIDER-SFO needs to compute full gradient per $q$ iteration. This is the key ingredient for the superior performance of D-SPIDER-SFO. By this sight, D-SPIDER-SFO achieves a faster convergence rate. In the following analysis, we show that the gradient cost of D-SPIDER-SFO is $\mathcal{O}(\frac{1}{\epsilon^3})$.

\subsection{Convergence Rate Analysis}

In this subsection, we analyze the convergence properties of the D-SPIDER-SFO algorithm. We propose the error bound of the gradient estimation in Lemma \ref{lemma:variance_of_vk}, which is critical in convergence analysis. Then, based on Lemma \ref{lemma:variance_of_vk}, we present the upper bound of gradient cost for finding an $\epsilon$ approximate first-order stationary point, which is the state-of-the-art for decentralized nonconvex optimization problems.

Before analyzing the convergence properties, we consider the update rule of global optimization variables as follows,
$$\frac{X_{k+1}\mathbf{1}_n}{n} = \frac{(X_k W - \eta V_k)\mathbf{1}_n}{n}= \frac{(X_k - \eta V_k)\mathbf{1}_n}{n}.$$

To analyze the convergence rate of D-SPIDER-SFO, we conclude the following Lemma \ref{lemma:variance_of_vk} which bounds the error of the gradient estimator $\frac{V\mathbf{1}_{n}}{n}$.

\begin{lemma}\label{lemma:variance_of_vk}
    Under the Assumption \ref{assumption_SPIDER}, we have
   	\begin{align*}
	    &\frac{1}{K}\sum_{k=0}^{K-1} \mathbb{E} \left\|\frac{\partial f(X_k) \mathbf{1}_n}{n} - \frac{V_k \mathbf{1}_n}{n} \right\|^2\\
	    \leq& \frac{12C_1L^2 q}{KnDS_2}\mathbb{E}\|X_1\|^2_F + \left(\frac{72C_2\eta^4L^4q^2}{KDS^2_2} + \frac{3qL^2\eta^2}{KS_2}\right)\sum_{k=1}^{K-1} \mathbb{E}\left\|\frac{V_{k-1} \mathbf{1}_n}{n} \right\|^2 + \left(1+ \frac{192C_2L^2\eta^2}{nDS_1}\right)\frac{\sigma^2}{S_1}.
	\end{align*}
    where 
    \begin{align*} C_1 &= \max\left\{\frac{1}{1-|b_n|^2},\frac{1}{(1-\lambda_2)^2}\right\},\\
        C_2 &= \max\left\{\frac{\lambda_n^2}{(1-|b_n|^2)}, \frac{\lambda_2^2}{(1-\sqrt{\lambda_2})^2(1-\lambda_2)}\right\},\\
        b_n &= \lambda_n - \sqrt{\lambda_n^2 - \lambda_n},\\
        D &= 1- \frac{48C_2q\eta^2L^2}{S_2}.
    \end{align*}
\end{lemma}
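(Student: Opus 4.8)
Here is the plan. The key move is to pass everything to the consensus direction $\mathbf{1}_n/n$, where the mixing matrix becomes invisible and the averaged iterate obeys an ordinary SPIDER/SARAH recursion; the estimation error of that recursion is then controlled by a within-block martingale decomposition together with a bound on the disagreement among the nodes. Since $W\mathbf{1}_n=\mathbf{1}_n$, right-multiplication by $W$ acts as the identity on $\mathbf{1}_n$, so from \eqref{general_update_rule} the averaged update is $\overline\Delta_k=-\eta\,\frac{V_k\mathbf{1}_n}{n}$. Inspecting Algorithm \ref{alg:my_algorithm} shows that $g_{k,i}-g_{k-1,i}$ is exactly the stochastic-gradient difference used in the $x$-update at every iterate (both for $\mod(k,q)=0$ and $\mod(k,q)\neq0$), so by induction $\frac{V_k\mathbf{1}_n}{n}=\frac{G_k\mathbf{1}_n}{n}=:\overline v_k$, and unrolling over the current block $[k_0:k]$ with $k_0=\lfloor k/q\rfloor q$ gives $\overline v_k=\overline H_{k_0}(X_{k_0})+\sum_{s=k_0+1}^{k}\big(\overline H_s(X_s)-\overline H_s(X_{s-1})\big)$, i.e.\ the SARAH-type estimator restarted every $q$ steps with a full ($S^{(1)}$-sample) gradient. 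Letting $\mathcal F_k$ denote the history before iterate $k$, both $\overline H_s(X_s)$ and $\overline H_s(X_{s-1})$ are conditionally unbiased for $\frac{\partial f(X_s)\mathbf{1}_n}{n}$ and $\frac{\partial f(X_{s-1})\mathbf{1}_n}{n}$, so the error $e_k:=\overline v_k-\frac{\partial f(X_k)\mathbf{1}_n}{n}$ satisfies $\mathbb E[e_k\mid\mathcal F_k]=e_{k-1}$ within a block; its increments are therefore $L^2$-orthogonal and $\mathbb E\|e_k\|^2=\mathbb E\|e_{k_0}\|^2+\sum_{s=k_0+1}^{k}\mathbb E\|e_s-e_{s-1}\|^2$.

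Next I would bound the two ingredients and introduce the disagreement recursion. The restart term is controlled by the bounded-variance part of Assumption \ref{assumption_SPIDER} and convexity of $\|\cdot\|^2$ over the $n$ nodes, giving $\mathbb E\|e_{k_0}\|^2\le\sigma^2/S_1$. Each increment $e_s-e_{s-1}$ is the centered mini-batch estimate of $\overline H_s(X_s)-\overline H_s(X_{s-1})$, so using within-batch independence (to gain the $1/S_2$), average Lipschitzness (Assumption \ref{assumption_SPIDER}), and convexity over nodes, $\mathbb E\|e_s-e_{s-1}\|^2\le\frac{L^2}{nS_2}\mathbb E\|X_s-X_{s-1}\|_F^2$. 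I then split $X_s-X_{s-1}$ orthogonally via $\Pi:=\frac1n\mathbf{1}_n\mathbf{1}_n^T$: its $\Pi$-component has squared Frobenius norm exactly $\eta^2 n\|\overline v_{s-1}\|^2=\eta^2 n\|V_{s-1}\mathbf{1}_n/n\|^2$, which after summing (each iterate lying in at most $q$ blocks) produces the $\sum_k\mathbb E\|V_{k-1}\mathbf{1}_n/n\|^2$ contribution with a coefficient of order $qL^2\eta^2/(KS_2)$. The disagreement $Y_k:=X_k(I-\Pi)$ satisfies the second-order $\mathrm{D}^2$-type recursion $Y_{k+1}=(2Y_k-Y_{k-1})\widetilde W-\eta(G_k-G_{k-1})\widetilde W$ with $\widetilde W:=W-\Pi$, the driving term being $\partial F(X_{k_0};\xi_{k_0})-G_{k_0-1}$ at a checkpoint. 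Diagonalizing $\widetilde W$, each eigen-mode with eigenvalue $\mu$ follows a scalar second-order recursion with characteristic roots $\mu\pm\sqrt{\mu^2-\mu}$: for $0<\mu\le\lambda_2$ these have modulus $\sqrt{\mu}\le\sqrt{\lambda_2}<1$, and for $\mu=\lambda_n<0$ the dominant root has modulus $|b_n|<1$ precisely when $\lambda_n>-\tfrac13$ — this is exactly where that assumption enters. Summing the resulting geometric convolutions yields $\sum_k\mathbb E\|Y_k\|_F^2\le C_1\cdot O(\mathbb E\|X_1\|_F^2)+C_2\,\eta^2\sum_k\mathbb E\|(G_k-G_{k-1})\widetilde W\|_F^2$, with $C_1,C_2,b_n$ as stated and the first term absorbing the initial disagreement.

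It then remains to close the loop. At a non-checkpoint, $\mathbb E\|G_k-G_{k-1}\|_F^2\le O(L^2)\mathbb E\|X_k-X_{k-1}\|_F^2$ by average Lipschitzness; at a checkpoint $k_0$, $G_{k_0-1}$ is itself a recursive estimator that accumulates the displacements of the previous block, so $\mathbb E\|\partial F(X_{k_0};\xi_{k_0})-G_{k_0-1}\|_F^2\le O\!\big(\tfrac{n\sigma^2}{S_1}\big)+O\!\big(\tfrac{L^2}{S_2}\big)\sum_{j\in[k_0-q+1:k_0]}\mathbb E\|X_j-X_{j-1}\|_F^2$ (the finite-sum analogue of \eqref{ineq_v_k_of_D-SPIDER}). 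Substituting the $\Pi/(I-\Pi)$ split of the displacements back into these estimates, the quantities $\sum_k\mathbb E\|Y_k\|_F^2$ reappear on the right with total coefficient $\tfrac{48C_2q\eta^2L^2}{S_2}$ (the factor $q$ coming from a Cauchy--Schwarz over the at most $q$ summands of a restart block and the $1/S_2$ from mini-batch variance reduction), so moving them to the left divides everything by $D=1-\tfrac{48C_2q\eta^2L^2}{S_2}$ — which is why the convergence theorem will require $\eta$ small enough that $D>0$. What survives is an $\mathbb E\|X_1\|_F^2$ term with coefficient $\propto C_1L^2q/(KnDS_2)$, the $\sum_k\mathbb E\|V_{k-1}\mathbf{1}_n/n\|^2$ terms with coefficients $\propto C_2\eta^4L^4q^2/(KDS_2^2)$ (propagated through the disagreement) and $\propto qL^2\eta^2/(KS_2)$ (the bare $\Pi$-contribution, which does not pass through $D$), and a $\sigma^2/S_1$ term with coefficient $1$ (from $e_{k_0}$) plus $\propto C_2L^2\eta^2/(nDS_1)$ (propagated from the full-batch noise at checkpoints). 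Feeding all of this into the block decomposition, summing over $k=0,\dots,K-1$, and dividing by $K$ produces the stated inequality.

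The recursion identity and the martingale variance estimates are routine. The crux, and the expected main obstacle, is the disagreement analysis: unrolling the \emph{second-order} recursion for $Y_k$ with a block-restarted estimator, extracting the sharp moduli $\sqrt{\lambda_2}$ and $|b_n|$ (hence the $\lambda_n>-\tfrac13$ requirement) together with the constants $C_1,C_2$, bounding the checkpoint driving term through the accumulated displacements hidden inside $G_{k_0-1}$, and then resolving the resulting self-referential inequality into the factor $1/D$ with all constants exactly as claimed — that is by far the heaviest bookkeeping.
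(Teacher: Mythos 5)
Your proposal is correct and follows essentially the same route as the paper's own argument: the block-restarted SARAH error decomposition on the averaged iterate with the per-step increment bounded by $\tfrac{L^2}{nS_2}\mathbb{E}\|X_s-X_{s-1}\|_F^2$, the consensus/disagreement split of the displacements, the eigenmode analysis of the second-order disagreement recursion with characteristic roots $\lambda_i\pm\sqrt{\lambda_i^2-\lambda_i}$ (yielding $C_1$, $C_2$, $b_n$ and the $\lambda_n>-\tfrac13$ requirement), the checkpoint driving-term bound, and the resolution of the self-referential inequality into the factor $1/D$ — all of which is exactly the paper's Lemma on the deviation bound combined with its proof of this lemma. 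The only cosmetic difference is that you state the martingale increment decomposition as an equality where the paper telescopes with inequalities.
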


In Appendix, we will give the upper bound of $\mathbb{E}\|X_1\|^2_F$. Lemma \ref{lemma:variance_of_vk} shows that the error bound of the gradient estimator is related to the second moment of $\left\|\frac{X_k\mathbf{1}_n}{n}\right\|$. Then, we give the analysis of the convergence rate. W.l.o.g., we assume the algorithm starts from $0$, that is $X_0 = 0$, and define  $ \zeta_0 = \frac{1}{n}\sum_{i=1}^n\|\nabla f_i(\mathbf{0}) - \nabla f(\mathbf{0})\|$.

\begin{theorem}\label{3_SPIDER}
    For the online case, set parameters $S_1$, $S_2$, $\eta$, and $q$ as constants, and $C_1, C_2$, and $D$ as in Lemma \ref{lemma:variance_of_vk}. Then, under the Assumption \ref{assumption_SPIDER}, for Algorithm \ref{alg:my_algorithm}, we have 
    \begin{align*}
    &\frac{1}{K}\sum_{k=1}^K \mathbb{E}\left\|\nabla f\left(\frac{X_k \mathbf{1}_n}{n}\right)\right\|^2 + \frac{M}{K}\sum_{k=0}^{K-1}\mathbb{E}\left\| \frac{V_k \mathbf{1}_n}{n}\right\|^2\\
    \leq & \frac{2\mathbb{E}[f(\frac{X_0 \mathbf{1}_n}{n}) - f^*]}{\eta K} +  \left(1 + \frac{32C_2 L^2 \eta^2}{n q  D} + \frac{192C_2 L^2 \eta^2}{n S_2 D}\right)\frac{2\sigma^2}{S_1} + \frac{3\eta^2}{K}\left(\frac{4L^2C_1}{D} +  \frac{24L^2C_1 q}{D S_2}\right) (\sigma^2 + \zeta_0^2 + \|\nabla f(0)\|^2),
\end{align*}
    where 
    \begin{align*}
        &M :=1 - L\eta -\frac{6qL^2\eta^2}{S_2}\left[1+ \frac{4C_2L^2\eta^2}{D}\left(1+\frac{6q}{S_2}\right)\right].
    \end{align*}
\end{theorem}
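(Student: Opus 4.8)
The plan is a ``descent along the mean trajectory'' argument. Write $\bar x_k:=\frac{X_k\mathbf{1}_n}{n}$ and $\bar V_k:=\frac{V_k\mathbf{1}_n}{n}$, so that the update rule recalled just above the statement reads $\bar x_{k+1}=\bar x_k-\eta\bar V_k$. Since $f$ has an $L$-Lipschitz gradient,
\begin{align*}
f(\bar x_{k+1})\le f(\bar x_k)-\eta\langle\nabla f(\bar x_k),\bar V_k\rangle+\tfrac{L\eta^2}{2}\|\bar V_k\|^2,
\end{align*}
and inserting $-2\langle a,b\rangle=\|a-b\|^2-\|a\|^2-\|b\|^2$ with $a=\nabla f(\bar x_k)$, $b=\bar V_k$ gives
\begin{align*}
\tfrac{\eta}{2}\|\nabla f(\bar x_k)\|^2+\tfrac{\eta}{2}(1-L\eta)\|\bar V_k\|^2\le f(\bar x_k)-f(\bar x_{k+1})+\tfrac{\eta}{2}\|\nabla f(\bar x_k)-\bar V_k\|^2.
\end{align*}
Summing over $k$, telescoping $f(\bar x_0)-f(\bar x_K)\le f(\bar x_0)-f^*$, taking expectations and dividing by $K\eta/2$ reduces the theorem to an upper bound on the averaged mean-direction error $\frac1K\sum_k\mathbb{E}\|\nabla f(\bar x_k)-\bar V_k\|^2$: the leading $\frac{2\mathbb{E}[f(X_0\mathbf{1}_n/n)-f^*]}{\eta K}$ term is then already present, and the coefficient $1-L\eta$ is the seed of the quantity $M$.

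To bound that error I would split $\|\nabla f(\bar x_k)-\bar V_k\|^2\le 2\big\|\nabla f(\bar x_k)-\tfrac{\partial f(X_k)\mathbf{1}_n}{n}\big\|^2+2\big\|\tfrac{\partial f(X_k)\mathbf{1}_n}{n}-\bar V_k\big\|^2$. The second summand, averaged over $k$, is precisely what Lemma \ref{lemma:variance_of_vk} bounds; substituting it yields a $\sigma^2/S_1$ contribution, a multiple of $\frac1K\mathbb{E}\|X_1\|^2_F$, and---crucially---a multiple of $\frac1K\sum_k\mathbb{E}\|\bar V_{k-1}\|^2$. The first summand is the consensus error, which by Jensen and ($L$-)Lipschitzness of the $\nabla f_i$ is at most $\tfrac{L^2}{n}\sum_{i=1}^n\|x_{k,i}-\bar x_k\|^2$; to control its running average I would invoke an auxiliary lemma (deferred to the appendix) that unrolls the disagreement recursion $X_{k+1}-\bar x_{k+1}\mathbf{1}_n^T=(X_k-\bar x_k\mathbf{1}_n^T)W-\eta(V_k-\bar V_k\mathbf{1}_n^T)$ and uses the spectral gap. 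This is exactly where both $\lambda_2<1$ and $\lambda_n>-\tfrac13$ of Assumption \ref{assumption_SPIDER} enter: the latter guarantees that the characteristic roots of the associated second-order recursion---the quantities $b_n=\lambda_n-\sqrt{\lambda_n^2-\lambda_n}$ and $\sqrt{\lambda_2}$ making up $C_1,C_2$---have modulus strictly below $1$, so the geometric series for the disagreement converges and the consensus error is again dominated by terms of the shape $\tfrac{C_1}{Kn}\mathbb{E}\|X_1\|^2_F+(\text{coeff})\sum_k\mathbb{E}\|\bar V_{k-1}\|^2+(\text{coeff})\tfrac{\sigma^2}{S_1}$.

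Plugging both bounds back, I would then substitute the appendix estimate that bounds $\mathbb{E}\|X_1\|^2_F$ by a constant multiple of $n\eta^2\big(\tfrac{\sigma^2}{S_1}+\zeta_0^2+\|\nabla f(0)\|^2\big)$: this holds because, starting from $X_{-1}=X_0=\mathbf{0}$ and $G_{-1}=\mathbf{0}$, the first iterate is $X_1=\eta\,\partial F(\mathbf{0};\xi_0)W$ with $\|W\|=1$ and $\xi_0$ an $S^{(1)}$-size batch, so the within-worker variance bound applies and $\tfrac1n\sum_i\|\nabla f_i(\mathbf{0})\|^2$ splits through $\zeta_0$ and $\|\nabla f(0)\|$. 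All $\frac1K\sum_k\mathbb{E}\|\bar V_{k-1}\|^2$ contributions on the right---one from Lemma \ref{lemma:variance_of_vk}, one from the consensus lemma---are re-indexed, bounded by $\frac1K\sum_{k=0}^{K-1}\mathbb{E}\|\bar V_k\|^2$, and moved to the left; merging them with the existing $\frac{1-L\eta}{K}\sum_k\mathbb{E}\|\bar V_k\|^2$ is exactly how the coefficient $M=1-L\eta-\frac{6qL^2\eta^2}{S_2}\big[1+\frac{4C_2L^2\eta^2}{D}(1+\frac{6q}{S_2})\big]$ arises, while the residual right-hand side collapses to the two additive terms of the statement.

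The step I expect to be delicate is this final absorption. The feedback coefficient on $\sum_k\mathbb{E}\|\bar V_{k-1}\|^2$ carried in from Lemma \ref{lemma:variance_of_vk} already contains a factor $1/D$, and $D=1-48C_2q\eta^2L^2/S_2$ depends on the very same parameters, so one must check that, in the stated regime for $\eta,q,S_1,S_2$, the combined feedback stays strictly below the $\Theta(\eta)$ of budget available on the left, i.e.\ that $D>0$ and $M>0$ hold simultaneously---an elementary but bookkeeping-heavy inequality chase rather than a conceptual obstacle. A minor secondary point is keeping the index shifts consistent (the telescope naturally produces $\sum_{k=0}^{K-1}\|\nabla f(\bar x_k)\|^2$ whereas the statement writes $\sum_{k=1}^{K}$) and handling the $k=0$ step, whose estimator uses $G_{-1}=\mathbf{0}$ and the large batch $S^{(1)}$, uniformly with the rest.
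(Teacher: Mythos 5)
Your proposal follows essentially the same route as the paper's proof: the descent lemma with the polarization identity, the split of $\|\nabla f(\bar x_k)-\bar V_k\|^2$ into consensus error plus estimator error, Lemma \ref{lemma:variance_of_vk} for the latter, the appendix consensus lemma for the former, the bound $\mathbb{E}\|X_1\|_F^2\leq 3n\eta^2(\sigma^2+\zeta_0^2+\|\nabla f(0)\|^2)$, and the final absorption of the $\sum_k\mathbb{E}\|\bar V_k\|^2$ terms into $M$. The steps you flag as delicate (the $1/D$ feedback and the index shifts) are exactly the bookkeeping the paper carries out, so the plan is sound and matches.
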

By appropriately specifying the batch size $S_1, S_2$, the step size $\eta$, and the parameter $q$, we reach the following corollary. In the online learning case, we let the input parameters be
\begin{equation}\label{parameter_1}
    S_1 = \frac{\sigma^2}{\epsilon^2}, S_2 = \frac{\sigma}{\epsilon}, q = \frac{\sigma}{\epsilon},
\end{equation}
\begin{equation}\label{parameter_2}
    \eta<\min\left(\frac{-1+\sqrt{13}}{12 L}, \frac{1}{4\sqrt{3C_2}L}\right).
\end{equation}

\begin{corollary}\label{corollary: convergence rate}
    Set the parameters $S_1, S_2, q, \eta$ as in \eqref{parameter_1} and \eqref{parameter_2}, and set $K = \lfloor \frac{l}{\epsilon^2}\rfloor + 1$. 
 Then under the Assumption  \ref{assumption_SPIDER}, running Algorithm \ref{alg:my_algorithm} for $K$ iterations, we have
 \begin{align*}
 \frac{1}{K}\sum_{k=0}^{K-1}\mathbb{E} \left\|\nabla f\left(\frac{X_k\mathbf{1}}{n} \right)\right\|^2
 \leq 3\epsilon^2 + \frac{448 C_2L^2\eta^2\epsilon^3}{nD\sigma} ,
 \end{align*}

 where 
 \begin{align*}
     l := &\frac{2\mathbb{E}[f(0) -f^*]}{\eta} + \frac{84C_1L^2\eta^2}{D}(\sigma^2 + \zeta_0^2 + \|\nabla f(0)\|^2).
 \end{align*}
 The gradient cost is bounded by $2l\sigma\epsilon^{-3} + 2\sigma^2\epsilon^{-2}$.
\end{corollary}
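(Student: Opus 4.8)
The plan is to obtain Corollary~\ref{corollary: convergence rate} as a direct specialization of Theorem~\ref{3_SPIDER}: plug in the prescribed values of $S_1,S_2,q,\eta,K$ from \eqref{parameter_1}--\eqref{parameter_2}, discard a provably nonnegative term, and simplify. First I would verify that the coefficient $M$ in Theorem~\ref{3_SPIDER} satisfies $M\ge 0$, so that $\tfrac{M}{K}\sum_{k=0}^{K-1}\mathbb{E}\|V_k\mathbf{1}_n/n\|^2\ge 0$ can be dropped and the theorem's bound then applies to $\tfrac1K\sum_k\mathbb{E}\|\nabla f(X_k\mathbf{1}_n/n)\|^2$ on its own. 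With $q=S_2=\sigma/\epsilon$ one has $q/S_2=1$, so $D=1-48C_2\eta^2L^2$ and $M=1-L\eta-6L^2\eta^2(1+\tfrac{28C_2L^2\eta^2}{D})$; the bound $\eta<\tfrac1{4\sqrt{3C_2}L}$ keeps $D>0$ (it gives $48C_2\eta^2L^2<1$), and the bound $\eta<\tfrac{-1+\sqrt{13}}{12L}$, equivalently $12L^2\eta^2+2L\eta<1$, is exactly what forces $1-L\eta-6L^2\eta^2$ to dominate the remainder $\tfrac{168C_2L^4\eta^4}{D}$, yielding $M\ge 0$. Establishing this last inequality cleanly (i.e.\ that \eqref{parameter_2} really does make $M$ nonnegative, controlling $D$ away from $0$ along the way) is the one genuinely delicate computation and, I expect, the main obstacle; everything after it is bookkeeping.

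Next I would substitute \eqref{parameter_1} into the right-hand side of Theorem~\ref{3_SPIDER}. Since the algorithm starts from $X_0=0$ (so $X_0\mathbf{1}_n/n=0$), the first term is $2\mathbb{E}[f(0)-f^*]/(\eta K)$. Because $S_1=\sigma^2/\epsilon^2$, the factor $2\sigma^2/S_1$ equals $2\epsilon^2$; because $q=S_2=\sigma/\epsilon$, the two terms $\tfrac{32C_2L^2\eta^2}{nqD}+\tfrac{192C_2L^2\eta^2}{nS_2D}$ collapse to $\tfrac{224C_2L^2\eta^2\epsilon}{n\sigma D}$, so the middle term becomes $2\epsilon^2+\tfrac{448C_2L^2\eta^2\epsilon^3}{nD\sigma}$. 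Using $q/S_2=1$ again, the bracket $\tfrac{4L^2C_1}{D}+\tfrac{24L^2C_1q}{DS_2}$ equals $\tfrac{28L^2C_1}{D}$, so the last term is $\tfrac{84C_1L^2\eta^2}{KD}(\sigma^2+\zeta_0^2+\|\nabla f(0)\|^2)$. Collecting the two $O(1/K)$ pieces gives precisely $\tfrac{l}{K}$ with $l$ as defined in the statement, so the right-hand side of Theorem~\ref{3_SPIDER} equals $\tfrac{l}{K}+2\epsilon^2+\tfrac{448C_2L^2\eta^2\epsilon^3}{nD\sigma}$.

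Finally, since $K=\lfloor l/\epsilon^2\rfloor+1\ge l/\epsilon^2$ we have $l/K\le\epsilon^2$, whence $\tfrac1K\sum_k\mathbb{E}\|\nabla f(X_k\mathbf{1}_n/n)\|^2\le 3\epsilon^2+\tfrac{448C_2L^2\eta^2\epsilon^3}{nD\sigma}$, which is the claimed bound (the shift between $\sum_{k=1}^{K}$ in Theorem~\ref{3_SPIDER} and $\sum_{k=0}^{K-1}$ in the corollary is harmless, both being averages over $K$ consecutive iterates with $X_0=0$). For the gradient cost I would count per iteration: among the $K$ iterations at most $\lceil K/q\rceil\le K/q+1$ satisfy $\mathrm{mod}(k,q)=0$ and each uses a batch of $S_1$ samples, while each of the remaining iterations uses a batch of $S_2$ samples; hence the total number of sampled stochastic gradients is at most $(K/q+1)S_1+KS_2$. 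Substituting $S_1=\sigma^2/\epsilon^2$, $S_2=q=\sigma/\epsilon$, and $K\le l/\epsilon^2+1$ turns this into $l\sigma\epsilon^{-3}+\sigma^2\epsilon^{-2}+l\sigma\epsilon^{-3}+O(\sigma\epsilon^{-1})$, i.e.\ $2l\sigma\epsilon^{-3}+2\sigma^2\epsilon^{-2}$ after absorbing the lower-order terms, which is the stated cost and gives the $\mathcal{O}(\epsilon^{-3})$ complexity.
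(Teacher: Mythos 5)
Your proposal follows the paper's proof essentially step for step: verify $M\ge 0$ under \eqref{parameter_2} so the $\frac{M}{K}\sum_k\mathbb{E}\|V_k\mathbf{1}_n/n\|^2$ term can be dropped, substitute $S_1=\sigma^2/\epsilon^2$, $S_2=q=\sigma/\epsilon$ into Theorem~\ref{3_SPIDER} so the $1/K$ pieces collapse to $l/K\le\epsilon^2$, and count $(\lfloor K/q\rfloor+1)(S_1+(q-1)S_2)$ gradient evaluations to get $2l\sigma\epsilon^{-3}+2\sigma^2\epsilon^{-2}$. The one step you flag as delicate --- that \eqref{parameter_2} forces $M\ge 0$ --- is indeed the only nontrivial part, and there the paper itself quietly uses a marginally stronger constant ($\eta\le\frac{1}{4\sqrt{6C_2}L}$ in the appendix, giving $48C_2L^2\eta^2\le\frac12$, rather than the $\frac{1}{4\sqrt{3C_2}L}$ of \eqref{parameter_2}, which only gives $D>0$); with that adjustment your argument closes exactly as the paper's does.
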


\begin{remark}
    Corollary \ref{corollary: convergence rate} shows that measured by gradient cost, D-SPIDER-SFO achieves the convergence rate of $\mathcal{O}(\epsilon^{-3})$, which is similar to its centralized counterparts. Due to properties of decentralized optimization problems, the coefficient in Corollary \ref{corollary: convergence rate} of the term $\epsilon^{-3}$ depends on the network topology $W$ and the data variance among workers $\zeta^2$ in addition, while compared with the centralized competitor \cite{fang2018spider}. Although the differences exist, we conduct experiments to show that D-SPIDER-SFO converges with a similar speed to C-SPIDER-SFO.
\end{remark}

\section{Experiments}

In this section, we conduct extensive experiments to validate our theory. We introduce our experiment settings in the first subsection. Then in the second subsection, we conduct the experiments to demonstrate that D-SPIDER-SFO can get a similar convergence rate to C-SPIDER-SFO and converges faster than D-PSGD and D$^2$. Moreover, we validate that D-SPIDER-SFO outperforms its centralized counterpart, C-SPIDER-SFO, on the networks with low bandwidth or high latency. In the final, we show that D-SPIDER-SFO is robust to the data variance among workers. The code of D-SPIDER-SFO is available on GitHub at \url{https://github.com/MIRALab-USTC/D-SPIDER-SFO}.

\subsection{Experiment setting}

\textbf{Datasets and models}
We conduct our experiments on the image classification task. In our experiments, we train our models on CIFAR-10 \cite{krizhevsky2009learning}. The CIFAR-10 dataset consists of 60,000 32x32 color images in 10 classes when the training set has 50,000 images. For image classification, we train two convolution neural network models on CIFAR-10. The first one is LeNet5 \cite{Lecun98gradient-basedlearning}, which consists of a 6-filter $5\times 5$ convolution layer, a $2\times 2$ max-pooling layer, a 16-filter $5\times 5$ convolution layer and two fully connected layers with 120, 84 neurons respectively. The second one is ResNet-18 \cite{He2015DeepRL}.\\
\textbf{Implementations and setups} We implement our code on framework PyTorch. All implementations are compiled with PyTorch1.3 with gloo. We conduct experiments both on the CPU server and GPU server. CPU cluster is a machine with four CPUs, each of which is an Intel(R) Xeon(R) Gold 6154 CPU @ 3.00GHz with 18 cores. GPU server is a machine with 8 GPUs, each of which is a Nvidia GeForce GTX 2080Ti. In the experiments, we use the ring network topology, seeing each core or GPU as a node, with corresponding symmetric doubly stochastic matrix $W$ in the form of
\begin{align*}
    W = \left(
    \begin{matrix}
    1/2 &1/4 &    &    &    & 1/4\\
    1/4 &1/2 &1/4  &  &  &\\
    &1/4 &1/2 &\ddots &  &\\
    &  & \ddots & \ddots  & 1/4 &  &\\
    &  &  & 1/4 & 1/2 &1/4  \\
    1/4&  &  &  &1/4 &1/2
    \end{matrix}
    \right)\in \mathbb{R}^{n\times n}.
\end{align*}

\subsection{Experiments of D-SPIDER-SFO}

To show that D-SPIDER-SFO can get a similar convergence rate to its centralized version, we choose the computational complexity as metrics instead of the wall clock speed. In the experiments of training LeNet5, for D-PSGD and D$^2$, we use the constant learning rate $\frac{\eta_0}{\sqrt{K/n}}$ and tune $\eta_0$ from $\{0.1, 0.05, 0.01,0.005, 0.001\}$ and set the batch size $16$ for each node, where $K$ is the number of iterates and $n$ is the number of workers. For D-SPIDER-SFO and C-SPIDER-SFO, we set $S_1=256, S_2 = 16, q =16$ for each node, and tune the learning rate from $\{0.1,0.05,0.01,0.005,0.001\}$. When we conduct experiments on ResNet-18, for D-PSGD and D$^2$, we tune $\eta_0$ from $\{0.03, 0.01, 0.003, 0.001, 0.0003\}$, and also tune the learning rate of D-SPIDER-SFO and C-SPIDER-SFO from the same set $\{0.03, 0.01, 0.003, 0.001, 0.0003\}$. We conduct experiments on a computational network with eight nodes. Due to the space limitation, we show the experiments of training convolutional neural network models, LetNet5 and ResNet-18, on 8 GPUs in this paper and list the experiments on the CPU cluster in Supplement Material.

\begin{figure*}[t]
    \centering
        \includegraphics[width=1.00\columnwidth]{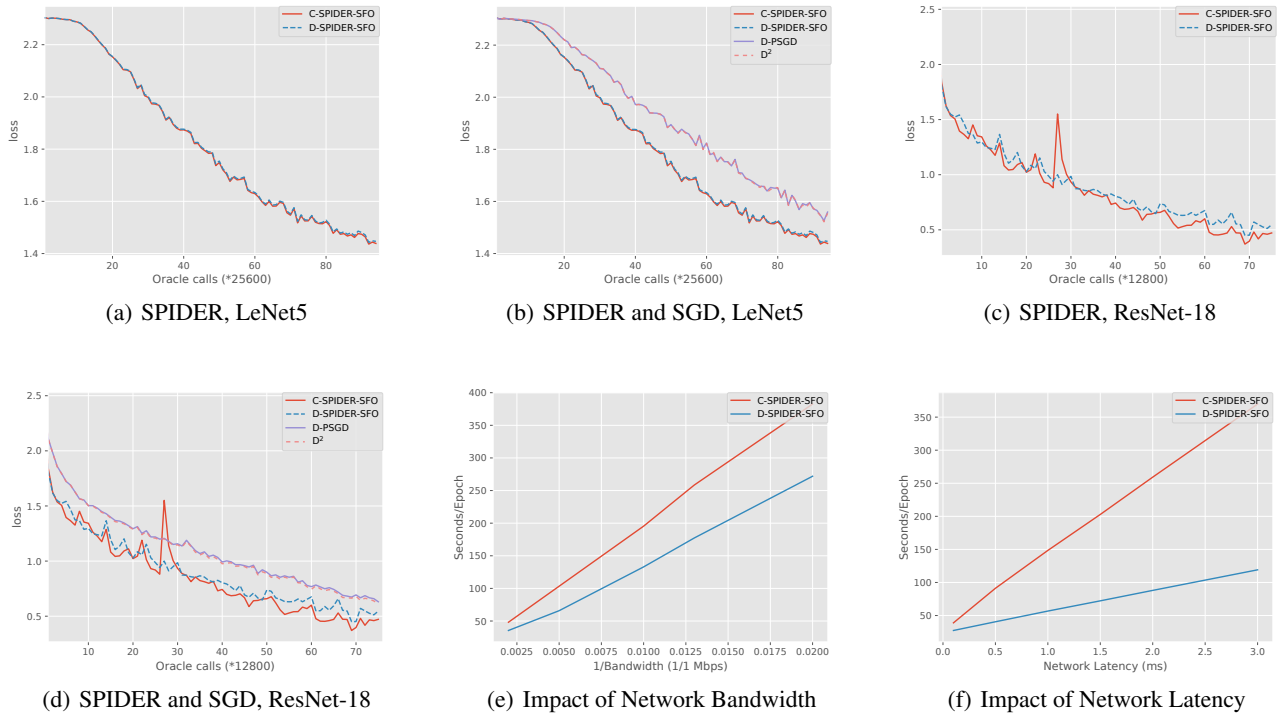}
        \caption{In the experiments, we train two convolutional neural network, LeNet5 and ResNet18. Fig. 1(a) and 1(c) are the comparisons between D-SPIDER-SFO and C-SPIDER-SFO. Fig. 1(b) and 1(d) are the comparisons between D-SPIDER-SFO, C-SPIDER-SFO, D-PSGD, and D$^2$. Fig. 1(e) and 1(d) show the impact of the bandwidth and latency.}
        \label{fig:all}
\end{figure*}

The gradient computation cost of both D-PSGD and D$^2$ is $\mathcal{O}(\epsilon^{-4})$ for finding an $\epsilon$ approximated stationary point, while D-SPIDER-SFO achieves $\mathcal{O}(\epsilon^{-3})$. Figure \ref{fig:all}(b) and \ref{fig:all}(d) validates our theoretical analysis and shows that D-SPIDER-SFO converges faster than D-PSGD and D$^2$.  Moreover, figure \ref{fig:all}(a) and \ref{fig:all}(c) also shows that D-SPIDER-SFO achieves a similar convergence rate to its centralized competitor.

As the decentralized network has more balanced communication patterns, D-SPIDER-SFO should outperform its centralized counterpart, when the communication becomes the bottleneck of the computational network. To demonstrate the above statement, we use the wall clock time as the metrics. In this experiment, we train LeNet5 on a cluster with 8 GPUs. We adopt the same parameters and experiment settings as what we use to train LeNet5. We use the tc command to control the bandwidth and latency of the network.  Figure \ref{fig:all}(e) and \ref{fig:all}(f) shows the wall clock time to finish one epoch on different network configurations. When the bandwidth becomes smaller, or the latency becomes higher, D-SPIDER-SFO can be even one order of magnitude faster than its centralized counterpart. The experiments demonstrate that the balanced communication pattern improves the efficiency of D-SPIDER-SFO.

\citet{tang2018d} proposed D$^2$ algorithm is less sensitive to the data variance across workers. From the theoretical analysis, D-SPIDER-SFO is also robust to that variance. The experiments demonstrate the statement and show that D-SPIDER-SFO converges faster than D$^2$ when the data variance across workers is maximized.

We follow the method proposed in \cite{tang2018d} to create a data distribution with large data variance for the comparison between D-SPIDER-SFO and D$^2$. We conduct the experiments on a server with 5 GPUs and choose the computational complexity as metrics. Each worker only has access to two classes of the whole dataset, called the unshuffled case, and we tune the learning rate of D$^2$ as before.

Figure \ref{Fig:D2}(a) shows that D-PSGD does not converge in the unshuffled case, which is consistent with the original work \cite{tang2018d}. Figure \ref{Fig:D2}(b) shows that D-SPIDER-SFO converges faster than D$^2$, and even it has a similar computing complexity as its centralized implementation. The experiments demonstrate the theoretical statement that D-SPIDER-SFO is robust to the data variance across workers.

\begin{figure*}
    \centering
        \includegraphics[width=1.0\columnwidth]{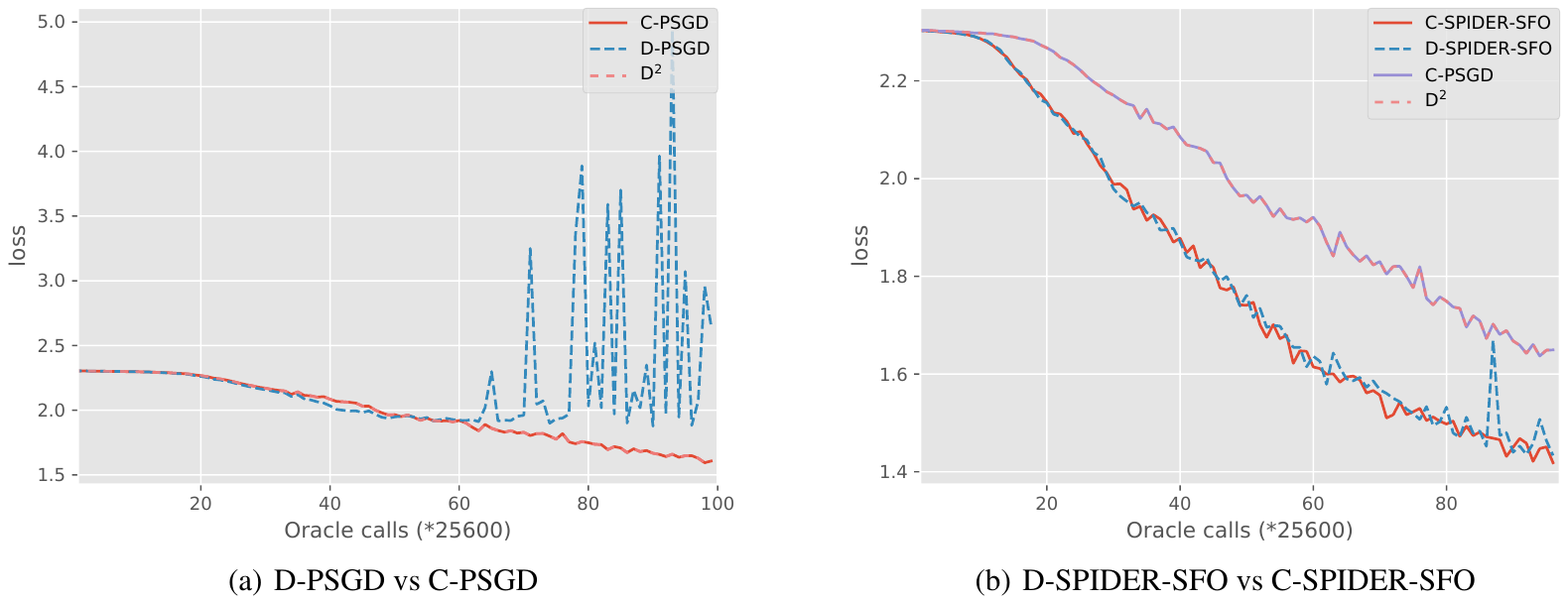}
        \caption{Fig. 2(a) is the comparison between D-PSGD and its centralized parallel version when the data is unshuffled. Fig. 2(b) is the comparison between D-SPIDER-SFO with its centralized counterpart when the data is unshuffled.}
        \label{Fig:D2}
\end{figure*}

\section{Conclusion}

In this paper, we propose D-SPIDER-SFO as a decentralized parallel variant of SPIDER-SFO for a faster convergence rate for nonconvex problems. We theoretically analyze that D-SPIDER-SFO achieves an $\epsilon$-approximate stationary point in the gradient cost of $\mathcal{O}(\epsilon^{-3})$. To the best of our knowledge, D-SPIDER-SFO achieves the state-of-the-art performance for solving nonconvex optimization problems on decentralized networks. Experiments on different network configurations demonstrate the efficiency of the proposed method.

\fontsize{9.3pt}{10.0pt}
\selectfont

\newpage

\allowdisplaybreaks[4]

\setcounter{lemma}{1}
\setcounter{assumption}{0}
\setcounter{theorem}{0}
\setcounter{corollary}{0}

{
{
\centering\LARGE\bf 
D-SPIDER-SFO: A Decentralized Optimization Algorithm with Faster Convergence Rate for Nonconvex Problems

Supplementary Material
\par
}
}
\setcounter{section}{0} 
\renewcommand\thesection{\Alph{section}}
${}$ 

This is the supplementary material of the paper "D-SPIDER-SFO: A Decentralized Optimization Algorithm with Faster Convergence Rate for Nonconvex Problems". We provide the proof to all theoretical results in this paper in this section. To help readers understand the proof, we list the necessary assumptions, which is the same as that in the main submission.

\begin{assumption} 
    We make the following commonly used assumptions for the convergence analysis.
    \begin{enumerate}
        \item \textbf{Lipschitz gradient: } All local loss functions $f_i(\cdot)$ have $L$-Lipschitzian gradients.
        \item \textbf{Average Lipschitz gradient: } In each fixed node $i$, the component function $F_i(x_i;\xi_i)$ has an average L-Lipschitz gradient, that is,
        $$\mathbb{E}\|\nabla F_i(x;\xi_i) - \nabla F_i(y;\xi_i)\|^2\leq L^2\|x - y\|^2,\forall x,y.$$
        \item \textbf{Spectral gap:} Given the symmetric doubly stochastic matrix $W$. Let the eigenvalues of $W\in \mathbb{R}^{n\times n}$ be $\lambda_1\geq \lambda_2\geq \cdots \geq \lambda_n$. We denote by $\lambda$ the second largest value of the set of eigenvalues, i.e., $$\lambda = \max_{i\in \{2,\cdots,n\}} \lambda_i = \lambda_2.$$ We assume $\lambda<1$ and $\lambda_n >-\frac{1}{3}$.
        \item \textbf{Bounded variance: } Assume the variance of stochastic gradient within each worker is bounded, which implies there exists a constant $\sigma$, such that
        $$\mathbb{E}_{\xi\sim \mathcal{D}_i}\|\nabla F_i(x;\xi) - \nabla f_i(x)\|^2\leq \sigma^2, \forall i, \forall x.$$
        \item (For D-PSGD Algorithm only) \textbf{Bounded data variance among workers: } Assume the variance of full gradient among all workers is bounded, which implies that there exists a constant $\zeta$, such that
        $$\mathbb{E}_{i \sim \mathcal{U}([n])}\|\nabla f_i(x) - \nabla f(x)\|^2\leq \zeta^2, \forall i, \forall x. $$
    \end{enumerate}
\end{assumption}
\textbf{Notation}: Let $\|\cdot\|$ be the vector and the matrix $\ell_2$ norm and $\|\cdot\|_F$ be the matrix Frobenius norm. $\nabla f(\cdot)$ denotes the gradient of a function $f$. Let $\mathbf{1}_n$ be the column vector in $\mathbb{R}^n$ with $1$ for all elements and $e_i$ be the column vector with a 1 in the $i$th coordinate and 0's elsewhere. We denote by $f^*$ the optimal solution of $f$. For a matrix $A\in \mathbb{R}^{n\times n}$, let $\lambda_i(A)$ be the $i$-th largest eigenvalue of a matrix. For any fixed integer $j\geq i\geq 0$, let $[i:j]$ be the set $\{i,i+1, \dots, j\}$ and $\{x\}_{i:j}$ be the sequence $\{x_i, x_{i+1}, \dots, x_j\}$.

\subsection{Basics}

Consider the update rule:
\begin{equation}\label{update_rule}
    X_{k+1} = \left\{
    \begin{aligned}
        &[2X_k - X_{k-1} - \eta(\partial F(X_k; \xi_k) - \partial F(X_{k-1};\xi_k))]W\qquad \mod(k,q)\not= 0,\\
        &[2X_k - X_{k-1} - \eta(\partial F(X_k; \xi_k) - G_{k-1})]W \qquad \mod(k,q)= 0.
    \end{aligned}
    \right.
\end{equation}
Since $W$ is symmetric, we have $W = P\diag(\lambda(W)) P^{T}$. Then applying the decomposition to the update rule \eqref{update_rule}, and we have:\\
If $\mod(k,q) \not=0$, then
\begin{align*}
    X_{k+1}P =  & 2X_kP\diag(\lambda(W)) -  X_{k-1} P\diag(\lambda(W))\\
    &-  \eta(\partial F(X_k; \xi_k) - \partial F(X_{k-1};\xi_k))P\diag(\lambda(W)).
\end{align*}
If $\mod(k,q) =0$, then
\begin{align*}
    X_{k+1}P =  & 2X_kP\diag(\lambda(W)) -  X_{k-1} P\diag(\lambda(W))\\
    &-  \eta(\partial F(X_k; \xi_k) - G_{k-1})P\diag(\lambda(W)).
\end{align*}
Let $X_kP = Y_k = (y_{k,1},\dots, y_{k,n})$, $L_k = -\eta(G_{k} - G_{k-1})P = (l_{k,1}, l_{k,2}, \dots, l_{k,n})$, and $V_k = -\frac{1}{\eta}(X_k - X_{k-1}) + (G_{k} - G_{k-1}) = (v_{k,1},\dots, v_{k,n})$. According to the update rule of $G_k$, we have 
\begin{equation}
    L_k = \left\{
    \begin{aligned}
    &-\eta(\partial F(X_k; \xi_k) - \partial F(X_{k-1};\xi_k))P, \qquad \mod(k,q)\not=0,\\
    &-\eta(\partial F(X_k; \xi_k) - G_k)P, \qquad \mod(k,q)=0.
    \end{aligned}
    \right.
\end{equation}
\begin{equation}
     V_k = \left\{
     \begin{aligned}
     &-\frac{1}{\eta}(X_k - X_{k-1}) + (\partial F(X_k;\xi_k) - \partial F(X_{k-1};\xi_{k})),\qquad \mod(k,q)\not=0,\\
     &-\frac{1}{\eta}(X_k - X_{k-1}) + (\partial F(X_k;\xi_k) - G_{k-1}),\qquad \mod(k,q)=0.
     \end{aligned}
    \right.
\end{equation}
Therefore, we have $$y_{k+1,i} = \lambda_i(2y_{k,i} - y_{k-1,i} + l_{k,i}), \forall i\in \{1,2,\dots,n\}.$$ 
Moreover, averaging all local optimization variables, we have
\begin{align}
    \frac{X_{k+1}\mathbf{1}_n}{n} =\frac{X_{k}\mathbf{1}_n}{n} - \frac{\eta V_k\mathbf{1}_n}{n}.
\end{align}

\subsection{Proof of the boundedness of the deviation from the global optimization variable}
\begin{lemma}\label{lemma_add_sequence}
    Given two non-negative sequences $\{a_t\}_{t=1}^\infty$ and $\{b_t\}_{t=1}^\infty$ that satisfying
    $$a_t = \sum_{s=1}^t\rho^{t-s}b_s,$$
    with $\rho\in [0,1)$, we have
    \begin{align*}
        S_k &:=\sum_{t=0}^k a_t\leq \sum_{s=1}^k \frac{b_s}{1-\rho},\\
        D_k &:=\sum_{t=0}^k a^2_t\leq \sum_{s=1}^k \frac{b^2_s}{(1-\rho)^2}.
    \end{align*}
\end{lemma}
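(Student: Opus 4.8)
The plan is to establish both inequalities by substituting the defining recursion $a_t = \sum_{s=1}^t \rho^{t-s} b_s$ into the sums $S_k$ and $D_k$ and then exchanging the order of summation; observe first that the $t=0$ term of each sum vanishes, since $a_0$ is an empty sum, so effectively both outer sums begin at $t=1$.

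For the bound on $S_k$, I would write
\[
S_k = \sum_{t=1}^k \sum_{s=1}^t \rho^{t-s} b_s = \sum_{s=1}^k b_s \sum_{t=s}^k \rho^{t-s} = \sum_{s=1}^k b_s \sum_{j=0}^{k-s} \rho^j,
\]
and then use $\rho \in [0,1)$ together with $b_s \ge 0$ to bound each truncated geometric sum by $\sum_{j=0}^\infty \rho^j = \tfrac{1}{1-\rho}$, which yields $S_k \le \sum_{s=1}^k \tfrac{b_s}{1-\rho}$. Nonnegativity of the $b_s$ is exactly what licenses replacing the partial geometric sum by the full one.

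For the bound on $D_k$, the extra ingredient is a Cauchy--Schwarz step with the geometric weight split symmetrically: writing $\rho^{t-s} = \rho^{(t-s)/2}\cdot \rho^{(t-s)/2}$,
\[
a_t^2 = \Bigl(\sum_{s=1}^t \rho^{(t-s)/2}\,\rho^{(t-s)/2} b_s\Bigr)^2 \le \Bigl(\sum_{s=1}^t \rho^{t-s}\Bigr)\Bigl(\sum_{s=1}^t \rho^{t-s} b_s^2\Bigr) \le \frac{1}{1-\rho}\sum_{s=1}^t \rho^{t-s} b_s^2.
\]
Summing over $t$ from $1$ to $k$ and again exchanging the order of summation as in the first part gives $D_k \le \tfrac{1}{1-\rho}\sum_{s=1}^k b_s^2 \sum_{t=s}^k \rho^{t-s} \le \tfrac{1}{(1-\rho)^2}\sum_{s=1}^k b_s^2$.

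The argument is entirely elementary and there is no genuine obstacle; the only points requiring slight care are the symmetric splitting of $\rho^{t-s}$ in the Cauchy--Schwarz estimate for $D_k$ (a naive application without the split would fail to produce the factor $(1-\rho)^{-1}$ one needs) and the observation that $a_0 = 0$ so the $t=0$ terms may be dropped. This lemma is the workhorse used later to control the accumulated consensus deviation $\sum_k \|x_{k,i} - X_k\mathbf{1}_n/n\|^2$ arising from the geometric mixing induced by $W$.
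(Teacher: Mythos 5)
Your proof is correct. Note that the paper does not actually prove this lemma itself --- it simply defers to \citet{tang2018d} --- and your argument (interchange the order of summation for $S_k$, and for $D_k$ apply Cauchy--Schwarz after the symmetric split $\rho^{t-s}=\rho^{(t-s)/2}\cdot\rho^{(t-s)/2}$, then interchange summation again) is precisely the standard proof given in that reference, so it supplies the omitted details rather than taking a different route.
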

\begin{proof}
The proof of this Lemma \ref{lemma_add_sequence} can be found in \cite{tang2018d}.
\end{proof}

\begin{lemma}\label{lemma_add}
	 Given $\rho\in (-\frac{1}{3}, 0)\cup (0,1)$, for any two sequence $\{a_t\}_{t=0}^\infty$ and $\{b_k\}_{t=0}^\infty$ that satisfy 
	\begin{align*}
	    a_0 = b_0 = 0,
	    a_1 = b_1,
	    a_{t+1} = \rho(2a_t - a_{t-1}) + b_t - b_{t-1}, \forall t\geq 1,
	\end{align*}
	we have 
	$$ a_{t+1} = a_1\left(\frac{u^{t+1} - v^{t+1}}{u-v}\right) + \sum_{s=1}^t \beta_s\frac{u^{t-s+1} - v^{t-s+1}}{u-v}, \forall t \geq 0,$$
	where $\beta_s = b_s - b_{s-1}, u = \rho + \sqrt{\rho^2 - \rho}, v = \rho - \sqrt{\rho^2 - \rho}.$
\end{lemma}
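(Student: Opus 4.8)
The plan is to recognize the recursion as an inhomogeneous second-order linear recurrence and to solve it explicitly by induction. First I would write down the characteristic polynomial of the homogeneous part $a_{t+1} = 2\rho a_t - \rho a_{t-1}$, namely $z^2 - 2\rho z + \rho = 0$; its roots are precisely $u = \rho + \sqrt{\rho^2-\rho}$ and $v = \rho - \sqrt{\rho^2-\rho}$, so that $u+v = 2\rho$ and $uv = \rho$. The hypothesis $\rho \in (-\tfrac{1}{3},0)\cup(0,1)$ forces $\rho \notin\{0,1\}$, hence $\rho^2-\rho \neq 0$ and $u \neq v$, and also $uv = \rho \neq 0$; note that for $\rho \in (0,1)$ the two roots are complex conjugates, but every quantity $\frac{u^{n}-v^{n}}{u-v}$ is a symmetric polynomial in $u,v$, hence real, so the asserted formula is meaningful in all admissible cases.

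Next I would abbreviate $h_n := \frac{u^{n}-v^{n}}{u-v}$ for $n\ge 0$ and record the elementary facts $h_0 = 0$, $h_1 = 1$, $h_2 = u+v = 2\rho$, and the three-term relation $h_{n+1} = (u+v)h_n - uv\,h_{n-1} = 2\rho h_n - \rho h_{n-1}$ for $n\ge 1$. In this notation the target identity becomes $a_{t+1} = a_1 h_{t+1} + \sum_{s=1}^{t}\beta_s h_{t-s+1}$, which I would prove by strong induction on $t$. The base cases are immediate: for $t=0$ the sum is empty and $a_1 h_1 = a_1$; for $t=1$, using $a_0 = 0$ one gets $a_2 = 2\rho a_1 + \beta_1$, while the right-hand side is $a_1 h_2 + \beta_1 h_1 = 2\rho a_1 + \beta_1$.

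For the inductive step, assuming the formula for $a_t$ and $a_{t+1}$, I would substitute both into $a_{t+2} = 2\rho a_{t+1} - \rho a_t + \beta_{t+1}$. The $a_1$-terms combine through the three-term relation into $a_1(2\rho h_{t+1} - \rho h_t) = a_1 h_{t+2}$. For the forcing part, applying $2\rho h_{t-s+1} - \rho h_{t-s} = h_{t-s+2}$ on the overlapping range of summation turns the paired terms into $\sum_{s=1}^{t-1}\beta_s h_{t-s+2}$, and the two unpaired boundary contributions $2\rho\beta_t h_1$ and $\beta_{t+1}$ recombine, via $h_1 = 1$ and $h_2 = 2\rho$, as $\beta_t h_2 + \beta_{t+1} h_1$, thereby extending the sum to $\sum_{s=1}^{t+1}\beta_s h_{t-s+2}$. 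Collecting everything yields $a_{t+2} = a_1 h_{t+2} + \sum_{s=1}^{t+1}\beta_s h_{t-s+2}$, which is exactly the claimed identity with $t$ replaced by $t+1$.

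The only place that requires genuine care — and what I expect to be the main obstacle — is the index bookkeeping when the two inductive expressions are added: one sum runs to $s=t$ and the other only to $s=t-1$, so one must peel off the top term of each before invoking the $h$-recurrence and then check that the residual boundary terms reassemble into a single sum of the correct length. Once the identities $u+v=2\rho$, $uv=\rho$, $h_0=0$, $h_1=1$, $h_2=2\rho$ are in hand, the remainder is routine algebra.
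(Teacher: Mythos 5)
Your proof is correct. Note that the paper itself does not prove this lemma --- it simply cites Tang et al.\ (2018), where the identity is established --- so there is no in-paper argument to compare against; your self-contained induction is the natural way to supply the missing details. The key facts you rely on all check out: the characteristic polynomial $z^2 - 2\rho z + \rho$ of the homogeneous recurrence has roots $u,v$ with $u+v=2\rho$, $uv=\rho$, and $u\neq v$ because $\rho^2-\rho\neq 0$ on $(-\tfrac{1}{3},0)\cup(0,1)$ (real distinct roots for $\rho<0$, a conjugate pair for $\rho\in(0,1)$, with $h_n=\frac{u^n-v^n}{u-v}$ real in either case); the three-term relation $h_{n+1}=2\rho h_n-\rho h_{n-1}$ together with $h_1=1$, $h_2=2\rho$ makes the base cases $a_1h_1=a_1$ and $a_2=2\rho a_1+\beta_1=a_1h_2+\beta_1h_1$ immediate (using $a_0=b_0=0$); and in the inductive step the boundary terms $2\rho\beta_t h_1$ and $\beta_{t+1}$ do reassemble as $\beta_t h_2+\beta_{t+1}h_1$, extending the sum to $\sum_{s=1}^{t+1}\beta_s h_{t-s+2}$ as you claim. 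The index bookkeeping you flag as the delicate point is handled correctly.
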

\begin{proof}
The proof of this Lemma \ref{lemma_add} can be found in \cite{tang2018d}.
\end{proof}

\begin{lemma}\label{deviation_SPIDER}
Under the Assumption \ref{assumption_SPIDER}, we have 	\begin{align*}
    &\left(1- \frac{48C_2q\eta^2L^2}{S_2}\right)\sum_{k=0}^{K}\sum_{i=1}^n \mathbb{E}  \left\| \frac{X_k\mathbf{1}_n}{n} - x_{k,i}\right\|^2\\
    \leq &2C_1\mathbb{E}\|X_1\|^2_F + \frac{12C_2q\eta^4L^2n}{S_2}\sum_{k=1}^{K-1} \mathbb{E}\left\|\frac{V_{k-1} \mathbf{1}_n}{n} \right\|^2 + \frac{32C_2K\eta^2 \sigma^2}{qS_1} .
\end{align*}
\end{lemma}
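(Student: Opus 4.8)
The plan is to diagonalize $W$ and control the consensus error one eigen-mode at a time, in the spirit of the analysis of D$^2$. Write $W=P\diag(\lambda(W))P^{T}$ with $P$ orthogonal and first column $\mathbf 1_n/\sqrt n$, and set $Y_k=X_kP=(y_{k,1},\dots,y_{k,n})$. Because $P$ preserves Frobenius norms and the $\lambda_1=1$ mode is exactly the average, $\sum_{i=1}^n\|\tfrac{X_k\mathbf 1_n}{n}-x_{k,i}\|^2=\sum_{i=2}^n\|y_{k,i}\|^2$, so it suffices to bound $\sum_{k=0}^K\sum_{i=2}^n\mathbb E\|y_{k,i}\|^2$. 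For each $i\ge 2$ the coordinates satisfy the second–order recursion $y_{k+1,i}=\lambda_i(2y_{k,i}-y_{k-1,i}+l_{k,i})$, and since $L_k=-\eta(G_k-G_{k-1})P$ is the first difference of $-\eta G_kP$, Lemma~\ref{lemma_add} applies and gives the closed form $y_{t+1,i}=y_{1,i}\tfrac{u_i^{t+1}-v_i^{t+1}}{u_i-v_i}+\sum_{s=1}^{t}\beta_{s,i}\tfrac{u_i^{t-s+1}-v_i^{t-s+1}}{u_i-v_i}$, where $u_i,v_i$ are the roots of $z^2-2\lambda_iz+\lambda_i=0$ and $\beta_{s,i}$ is a $\lambda_i$-multiple of the $i$-th column of $L_s$; note $X_{-1}=X_0$ and $G_{-1}=0$ make $y_{1,i}$ the $i$-th column of $X_1P$, which is why $\mathbb E\|X_1\|_F^2$ will eventually surface.

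Next I would estimate the geometric weights $\big|\tfrac{u_i^{m}-v_i^{m}}{u_i-v_i}\big|$. When $\lambda_i\in(0,1)$ the two roots are complex conjugate of modulus $\sqrt{\lambda_i}\le\sqrt{\lambda_2}$, so these weights decay geometrically and their relevant sums are governed by $(1-\lambda_2)^{-1}$ (and $(1-\sqrt{\lambda_2})^{-1}$); when $\lambda_i\in(-\tfrac13,0)$ the roots are real, the dominant one being $b_i=\lambda_i-\sqrt{\lambda_i^2-\lambda_i}$, and the hypothesis $\lambda_n>-\tfrac13$ is exactly what forces $|b_i|\le|b_n|<1$, so the corresponding sums are governed by $(1-|b_n|^2)^{-1}$. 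Plugging these bounds into Lemma~\ref{lemma_add_sequence} turns $\sum_{t}\|y_{t,i}\|^2$ into a constant times $\|y_{1,i}\|^2$ plus the same constant times $\sum_{s}\|\beta_{s,i}\|^2$; collecting the two eigenvalue regimes produces precisely the constants $C_1$ and $C_2$. Summing over $i\ge 2$, using orthogonality of $P$ together with $\sum_{i\ge2}\|y_{1,i}\|^2\le\|X_1\|_F^2$, yields a bound of the shape $\sum_{k}\sum_{i}\mathbb E\|y_{k,i}\|^2\lesssim C_1\mathbb E\|X_1\|_F^2+C_2\eta^2\sum_{s}\mathbb E\|G_s-G_{s-1}\|_F^2$.

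It then remains to bound $\sum_{s}\mathbb E\|G_s-G_{s-1}\|_F^2$, split according to $s\bmod q$. On a non-restart step $G_s-G_{s-1}=\partial F(X_s;\xi_s)-\partial F(X_{s-1};\xi_s)$, and the average-Lipschitz part of Assumption~\ref{assumption_SPIDER} controls it by a constant times $\|X_s-X_{s-1}\|_F^2$; on a restart step $G_s-G_{s-1}=\partial F(X_s;\xi_s)-G_{s-1}$, and one unrolls $G_{s-1}$ over the preceding block of $q$ iterations, so a Cauchy–Schwarz over that block produces the factor $q$ multiplying the drift terms $\sum_{j\in\text{block}}\|X_{j+1}-X_j\|_F^2$, while the size-$S_1$ minibatch evaluated at the restart contributes the variance piece $\sigma^2/S_1$; since there are about $K/q$ restarts, this last piece totals $\tfrac{K\eta^2\sigma^2}{qS_1}$ up to the constant $C_2$, matching the $\tfrac{32C_2K\eta^2\sigma^2}{qS_1}$ term. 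Finally, each drift term is decomposed as $\|X_{k+1}-X_k\|_F^2=\|(X_{k+1}-X_k)(I-\tfrac{\mathbf 1_n\mathbf 1_n^{T}}{n})\|_F^2+n\eta^2\big\|\tfrac{V_k\mathbf 1_n}{n}\big\|^2$; the first summand equals the difference of two consensus-error matrices, hence by the triangle inequality is again bounded by the left-hand side quantity and can be moved across — this is what generates the coefficient $1-\tfrac{48C_2q\eta^2L^2}{S_2}$ — while the second summand is exactly the $\|V_k\mathbf 1_n/n\|^2$ term in the statement, which yields the $\tfrac{12C_2q\eta^4L^2n}{S_2}$ factor.

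The step I expect to be the main obstacle is the restart-step bookkeeping: one has to unroll the SPIDER estimator $G_{s-1}$ over a whole block, separate its martingale (centred, variance-reduced) part from its deterministic (telescoping) part, and track the exact powers of $q$, $\eta$, $L$, $S_1$, $S_2$ — all while ensuring that every consensus-error contribution generated along the way can be collected on the left with a \emph{strictly positive} coefficient. That positivity is exactly where the step-size restriction (equivalently $D=1-\tfrac{48C_2q\eta^2L^2}{S_2}>0$) is used; the chase for the numerical constants $48$, $12$, $32$ is routine but delicate, whereas the conceptual content sits entirely in the mode-wise solution of the recursion and the block structure of the estimator $G_k$.
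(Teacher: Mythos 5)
Your proposal follows essentially the same route as the paper's proof: diagonalize $W$, solve the second-order recursion mode by mode via Lemma~\ref{lemma_add}, split the eigenvalue regimes $(0,1)$ and $(-\tfrac13,0)$ to obtain $C_1,C_2$, bound $\sum_s\|L_s\|_F^2$ blockwise (average-Lipschitz on non-restart steps, unrolling $G_{s-1}$ over the block plus the $\sigma^2/S_1$ variance at each of the $K/q$ restarts), and finally decompose the drift into the mean mode $n\eta^2\|V_k\mathbf 1_n/n\|^2$ plus consensus-error terms that are absorbed on the left to produce the coefficient $1-\tfrac{48C_2q\eta^2L^2}{S_2}$. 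All key ingredients and their roles (including where $\lambda_n>-\tfrac13$ and the positivity of $D$ enter) match the paper's argument.
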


\begin{proof}
	Consider the update rule, $$y_{k+1,i} = \lambda_i(2y_{k,i} - y_{k-1,i} + l_{k,i}), \forall i\in \{1,2,\dots,n\}.$$ 
	Applying Lemma \ref{lemma_add}, we have 
	\begin{align}\label{expession_of_yk}
	y_{k+1,i} = \frac{a_i^{k+1} - b_i^{k+1}}{a_i - b_i}y_{1,i} + \lambda_i \sum_{s=1}^{k}\frac{a_i^{k+1-s} - b_i^{k+1-s}}{a_i - b_i} l_{s,i},
	\end{align}
	where we consider $y_{k,i}$, $\lambda_i$, $\lambda_il_{k,i}$, $a_i$, and $b_i$ as $a_t$, $\rho$, $\beta_t$, $u$ and $v$ in Lemma \ref{lemma_add_sequence} respectively.\\
    If $\lambda_{i} \in (-\frac{1}{3}, 0)$, then we have $a_i = \lambda_i +\sqrt{\lambda_i^2 -\lambda_i}\in (0,1)$ and $b_i =\lambda_i -\sqrt{\lambda_i^2 -\lambda_i} \in (-1, 0)$. We have 
	\begin{align}\label{bound_of_bk}
	\|y_{k+1,i}\|^2 \leq &2\|y_{1,i}\|^2 \left|\frac{a_i^{k+1} - b_i^{k+1}}{a_i - b_i}\right|^2 + 2|\lambda_i|^2 \left\|\sum_{s=1}^{k}\frac{a_i^{k+1-s} - b_i^{k+1-s}}{a_i - b_i} l_{s,i}\right\|^2 \notag \\
	\leq &2\|y_{1,i}\|^2 \left|\frac{a_i^{k+1} - b_i^{k+1}}{a_i - b_i}\right|^2 + 2|\lambda_i|^2 \left(\sum_{s=1}^{k}\left|\frac{a_i^{k+1-s} - b_i^{k+1-s}}{a_i - b_i}\right| \|l_{s,i}\|\right)^2.
	\end{align}
	Since $a_i>0$ and $b_i<-a_i<0$, we have
	\begin{align} \label{bound_of_a1}
	    \left|\frac{a_i^{k+1} - b_i^{k+1}}{a_i - b_i}\right| = |b_i^k|\cdot \left|\frac{a_i\left(\frac{a_i}{b_i}\right)^k - b_i}{a_i - b_i}\right| \leq |b_i|^k.
	\end{align}
	Combining \eqref{bound_of_bk} and \eqref{bound_of_a1}, we have 
	\begin{align}\label{bound_of_yk1}
	\|y_{k+1, i} \|^2 \leq 2 \|y_{1,i}\|^2 |b_i|^{2k} + 2|\lambda_i|^2 \left(\sum_{s=1}^{k}|b_i|^{k-s} \|l_{s,i}\|\right)^2.
	\end{align}
	
	If $\lambda_i \in (0,1)$, since $|a_i| = |b_i| = \sqrt{\lambda_i}$, then let $a_i = \sqrt{\lambda_i}e^{i\theta}$ and $b_i = \sqrt{\lambda_i}e^{-i\theta}$. 
	\begin{align}\label{bound_of_sin}
	    \frac{a_i^{k+1} - b_i^{k+1}}{a_i - b_i} = \frac{\sqrt{\lambda_i}^{(k+1)}e^{i(k+1)\theta} - \sqrt{\lambda_i}^{(k+1)}e^{-i(k+1)\theta}}{\sqrt{\lambda_i}e^{i\theta} - \sqrt{\lambda_i}e^{-i\theta}} = \frac{\sqrt{\lambda_i}^{k}\sin((k+1)\theta)}{\sin(\theta)}.
	\end{align}
	Applying \eqref{bound_of_sin} to \eqref{expession_of_yk}, we have, when $k\geq 1$,
	\begin{align}\label{ineq:10}
	    y_{k+1,i} \leq \frac{\sqrt{\lambda_i}^{k}\sin((k+1)\theta)}{\sin(\theta)} y_{1,i} + \lambda_i \sum_{s=1}^{k} \frac{\sqrt{\lambda_i}^{k - s}\sin((k+1-s)\theta)}{\sin(\theta)} l_{s,i}.
	\end{align}
	Clearly, inequality \eqref{ineq:10} holds when $k=0$.
	Then, we have
	\begin{align}\label{bound_of_y2}
	&\|y_{k+1,i}\|^2 |\sin(\theta)|^2 \notag \\
	\leq &2 \left|\sqrt{\lambda_i}^{k}\sin((k+1)\theta)\right|^2 \|y_{1,i}\|^2 + 2|\lambda_i|^2 \left\|\sum_{s=1}^{k} \sqrt{\lambda_i}^{k - s}\sin((k+1-s)\theta) l_{s,i}\right\|^2 \notag \\
	\leq &2 \left|\sqrt{\lambda_i}^{k}\sin((k+1)\theta)\right|^2 \|y_{1,i}\|^2 + 2 |\lambda_i|^2 \left(\sum_{s=1}^{k} \left|\sqrt{\lambda_i}^{k - s}\sin((k+1-s)\theta)\right| \|l_{s,i}\|\right)^2 \notag \\
	\leq &2 \left|\sqrt{\lambda_i}^{k}\right|^2 \|y_{1,i}\|^2 + 2|\lambda_i|^2 \left(\sum_{s=1}^{k} \left|\sqrt{\lambda_i}^{k - s}\right| \|l_{s,i}\|\right)^2.
	\end{align}
	If $\lambda_i \in (-\frac{1}{3}, 0)$, summing \eqref{bound_of_yk1} from $k=0$ to $K-1$, we have
	\begin{align}\label{bound_of_sum_yk1_i}
	\sum_{k=1}^K \|y_{k, i} \|^2
	\leq &\sum_{k=0}^{K-1} \left[2 \|y_{1,i}\|^2 |b_i|^{2k} + 2|\lambda_i|^2 \left(\sum_{s=1}^{k}|b_i|^{k-s} \|l_{s,i}\|\right)^2 \right]\notag \\
	\leq  & 2\|y_{1,i}\| \frac{1}{1-|b_i|^2} + 2|\lambda_i|^2 \sum_{s=1}^{K-1}   \frac{\|l_{s,i}\|^2}{(1-|b_i|)^2},
	\end{align}
	where we use Lemma \ref{lemma_add_sequence} and consider $\sum_{s=1}^{k} |b_i|^{k-s} \|l_{s,i}\|$, $|b_i|^{k-s}$, and $ \|l_{s,i}\|$ as $a_t$, $\rho^{t-s}$, and $b_t$ in Lemma \ref{lemma_add_sequence}.\\
	If $\lambda_i \in (0,1)$, for the similar process, we have
	\begin{align}\label{bound_of_sum_yk2_1}
	&\sum_{k=1}^K  \|y_{k,i}\|^2 |\sin^2(\theta)| \notag \\
	\leq & \sum_{k=0}^{K-1} \left[2 |\sqrt{\lambda_i}^{k}|^2 \|y_{1,i}\|^2 + 2|\lambda_i|^2 \left(\sum_{s=1}^{k} |\sqrt{\lambda_i}^{k - s}| \|l_{s,i}\|\right)^2 \right]\notag \\
	\leq & \frac{2 \|y_{1,i}\|^2}{1 - |\lambda_i|} + 2 |\lambda_i|^2 \sum_{s=1}^{K-1}    \frac{ \|l_{s,i}\|^2}{(1 - \sqrt{\lambda_i})^2},
	\end{align}
	where we use Lemma \ref{lemma_add_sequence} and consider $\sum_{s=1}^{k} |\sqrt{\lambda_i}^{k - s}| \|l_{s,i}\|$, $|\sqrt{\lambda_i}^{k - s}|$, and $ \|l_{s,i}\|$ as $a_t$, $\rho^{t-s}$, and $b_t$.\\
	Since $\sin^2(\theta) =  1 -\lambda_i$ and $\lambda_i\in (0,1)$, we have 
	\begin{align}\label{bound_of_sum_yk2_2}
	\sum_{k=1}^K  \|y_{k,i}\|^2 \leq &\frac{2 \|y_{1,i}\|^2}{(1 - |\lambda_i|)^2} + \frac{2 |\lambda_i|^2}{(1-\lambda_i)(1 - \sqrt{\lambda_i})^2} \sum_{s=1}^{K-1} \|l_{s,i}\|^2 \notag \\
	\leq & \frac{2 \|y_{1,i}\|^2}{(1 - |\lambda|)^2} + \frac{2 |\lambda|^2}{(1-\lambda)(1 - \sqrt{\lambda})^2} \sum_{s=1}^{K-1}  \|l_{s,i}\|^2,
	\end{align}
	where $\lambda = \lambda_2$.\\
	If $\lambda_i \in (-\frac{1}{3},0)$, using \eqref{bound_of_sum_yk1}, then 
	\begin{align}\label{bound_of_sum_yk1}
	\sum_{k=1}^K  \|y_{k,i}\|^2
	\leq & 2\|y_{1,i}\| \frac{1}{1-|b|} + 2|\lambda_n|^2 \sum_{s=1}^{K-1}  \frac{\|l_{s,i}\|^2}{(1-|b|)^2},
	\end{align}
	where $|b| = -\lambda_n + \sqrt{\lambda_n^2 -\lambda_n}$.\\
	Let $C_1 = \max\{\frac{1}{1 - |b|}, \frac{1}{(1-\lambda)^2}\}$ and $C_2 = \max\{\frac{\lambda_n^2}{(1 - |b|^2)}, \frac{\lambda^2}{(1-\sqrt{\lambda})^2 (1-\lambda)}\}$. Therefore, we have 
	\begin{align}\label{final_bound_of_sum_yk}
	    \sum_{k=1}^K  \|y_{k,i}\|^2\leq 2C_1 \|y_{1,i}\|^2 + 2C_2 \sum_{s=1}^{K-1} \|l_{s,i}\|^2.
	\end{align}
    In the next part, we will discuss the term $\sum_{k=0}^{K}\sum_{i=1}^n \mathbb{E}  \left\| \frac{X_k\mathbf{1}_n}{n} - x_{k,i}\right\|^2  $
	\begin{align}\label{relation_of_x_y}
	\sum_{k=0}^{K}\sum_{i=1}^n \mathbb{E}  \left\| \frac{X_k\mathbf{1}_n}{n} - x_{k,i}\right\|^2  =&\sum_{k=0}^{K}\mathbb{E}  \left\| \frac{X_k\mathbf{1}_n}{n}  \mathbf{1}_n^{T} - X_k\right\|^2_F \notag \\
	=&\sum_{k=1}^{K}\mathbb{E} \left\|X_k p_1p_1^T - X_kPP^T\right\|_F^2 \notag \\
	=&\sum_{k=1}^{K}\sum_{i=2}^n \mathbb{E} \|y_{k,i}\|^2.
	\end{align}
	Then, we discuss the term $\sum_{k=1}^K \sum_{i=2}^n \mathbb{E} \|y_{k,i}\|^2$, and firstly, we bound $\sum_{k=1}^{K-1} \sum_{i=2}^n \mathbb{E} \|l_{k,i}\|^2$.\\
	Let $K_0 = \left\lfloor \frac{K}{q}\right\rfloor$, we have
	\begin{align*}
	    \sum_{k=K_0+1}^{K-1} \sum_{i=2}^n \mathbb{E} \|l_{k,i}\|^2 \leq &\eta^2 \sum_{k=K_0+1}^{K-1} \mathbb{E}\left\|\left(\partial F(X_k;\xi_k) - \partial F(X_{k-1};\xi_k)\right)P\right\|^2_F \notag \\
    	\leq & \eta^2 \sum_{k=K_0+1}^{K-1} \sum_{i=1}^n \mathbb{E}\left\|\nabla F_i(x_{k,i};\xi_{k,i}) - \nabla F_i(x_{k-1,i};\xi_{k-1,i})\right\|^2 \notag \\
    	\leq & \eta^2 \sum_{k = K_0+1}^{K-1}\sum_{i=1}^n \frac{L^2}{S_2} \mathbb{E} \|x_{k,i} - x_{k-1,i}\|^2 \notag \\
    	\leq & \frac{\eta^2 L^2}{S_2} \sum_{k = K_0+1}^{K-1} \mathbb{E} \|X_k - X_{k-1}\|_F^2 \notag \\
    	\leq & \frac{\eta^2 L^2}{S_2} \sum_{k = K_0+1}^{K-1} \sum_{i=1}^n \mathbb{E}\|y_{k,i} - y_{k-1,i}\|^2.
	\end{align*}
	Then, we discuss the case that $k\in \{Tq+1,Tq+2,\dots, (T+1)q\}$.
	\begin{align*}
	    \sum_{k = Tq+1}^{(T+1)q}\sum_{i=2}^n\mathbb{E} \|l_{k,i}\|^2\leq &\eta^2 \sum_{k = Tq+1}^{(T+1)q -1} \mathbb{E}\left\|\left(\partial F(X_k;\xi_k) - \partial F(X_{k-1};\xi_k)\right)P\right\|^2_F \notag \\
	    & + \eta^2\mathbb{E}\|\partial F(X_{(T+1)q};\xi_{(T+1)q}) - G_{(T+1)q-1}\|^2.
	\end{align*}
	For convenience, we discuss the term $\mathbb{E}\|\partial F(X_{(T+1)q};\xi_{(T+1)q}) - G_{(T+1)q-1}\|^2$ firstly.
	\begin{align*}
	    &\mathbb{E}\left\|\partial F(X_{(T+1)q};\xi_{(T+1)q}) - G_{(T+1)q-1}\right\|^2\\
	    \leq &\mathbb{E}\left\|\partial F(X_{(T+1)q};\xi_{(T+1)q})- \sum_{k = Tq+1}^{(T+1)q -1}\left[\partial F(X_k;\xi_k) - \partial F(X_{k-1};\xi_k) \right] - \partial F(X_{Tq};\xi_{Tq}) \right\|^2\\
	    \leq & 2\mathbb{E}\left\|\partial F(X_{(T+1)q};\xi_{(T+1)q}) - \partial F(X_{Tq};\xi_{Tq}) \right\|^2 + 2\mathbb{E}\left\|\sum_{k = Tq+1}^{(T+1)q -1}\left[\partial F(X_k;\xi_k) - \partial F(X_{k-1};\xi_k) \right] \right\|^2\\
	    \leq & 2\mathbb{E}\left\|\sum_{k = Tq+1}^{(T+1)q}\left[\partial F(X_{k};\xi_{(T+1)q}) - \partial F(X_{k-1};\xi_{(T+1)q})\right] + \partial F(X_{Tq};\xi_{(T+1)q}) - \partial F(X_{Tq};\xi_{Tq})\right\|^2\\
	    & + 2(q-1)\sum_{k = Tq+1}^{(T+1)q -1}\mathbb{E}\left\|\partial F(X_k;\xi_k) - \partial F(X_{k-1};\xi_k)  \right\|^2\\
	    \leq & 4q\sum_{k = Tq+1}^{(T+1)q}\mathbb{E}\|\partial F(X_{k};\xi_{(T+1)q}) - \partial F(X_{k-1};\xi_{(T+1)q})\|^2 + 8\mathbb{E}\|\partial f(X_{Tq}) - \partial F(X_{Tq};\xi_{(T+1)q}) \|^2\\
	    &+ 8\mathbb{E}\|\partial F(X_{Tq};\xi_{Tq}) - \partial f(X_{Tq})\|^2 + 2(q-1)\sum_{k = Tq+1}^{(T+1)q -1}\mathbb{E}\left\|\partial F(X_k;\xi_k) - \partial F(X_{k-1};\xi_k)  \right\|^2.\\
	    \leq &\frac{(6q-2)L^2}{S_2}\sum_{k = Tq+1}^{(T+1)q}\sum_{i=1}^n\mathbb{E}\|x_{k,i} - x_{k-1,i}\|^2 + \frac{16\sigma^2}{S_1}.
	\end{align*}
	Then, we have 
	\begin{align*}
	    &\sum_{k = Tq+1}^{(T+1)q}\sum_{i=2}^n\mathbb{E} \|l_{k,i}\|^2\\
	    \leq &\eta^2 \sum_{k = Tq+1}^{(T+1)q -1} \mathbb{E}\left\|\left[\partial F(X_k;\xi_k) - \partial F(X_{k-1};\xi_k)\right]P\right\|^2_F  + \eta^2\mathbb{E}\left\|\left[\partial F(X_{(T+1)q};\xi_{(T+1)q}) - G_{(T+1)q-1}\right]P\right\|^2\\
	    \leq & \frac{L^2\eta^2}{S_2} \sum_{k = Tq+1}^{(T+1)q -1}\sum_{i=1}^n \mathbb{E}\left\| x_{k,i} - x_{k-1,i}\right\|^2 + \frac{(6q-2)\eta^2L^2}{S_2}\sum_{k = Tq+1}^{(T+1)q}\sum_{i=1}^n\mathbb{E}\|x_{k,i} - x_{k-1,i}\|^2 + \frac{16\eta^2\sigma^2}{S_1}\\
	    \leq &\frac{6q\eta^2L^2}{S_2}\sum_{k = Tq+1}^{(T+1)q}\sum_{i=1}^n\mathbb{E}\|x_{k,i} - x_{k-1,i}\|^2 + \frac{16\eta^2\sigma^2}{S_1}.
	\end{align*}
	In conclusion, we have
	\begin{align}\label{bound_of_sum_gk}
	\sum_{k=1}^{K-1} \sum_{i=2}^n \mathbb{E} \|l_{k,i}\|^2 \leq &\sum_{T = 0}^{\lfloor \frac{K}{q}\rfloor -1} \sum_{k=Tq+1}^{(T+1)q}\sum_{i=1}^n \mathbb{E} \|g_{k,i}\|^2 + \sum_{k=K_0 +1}^{K-1} \sum_{i=1}^n \mathbb{E} \|g_{k,i}\|^2 \notag\\
	\leq & \frac{6q\eta^2L^2}{S_2}\sum_{k=1}^{K_0} \sum_{i=1}^n\mathbb{E}\|x_{k,i} - x_{k-1,i}\|^2 + \frac{16K\eta^2\sigma^2}{qS_1} \notag\\
	\leq &\frac{6q\eta^2 L^2}{S_2}\sum_{k=1}^{K_0}\mathbb{E}\|X_k - X_{k-1}\|^2_F + \frac{16K\eta^2\sigma^2}{qS_1}\notag\\
	\leq &\frac{6q\eta^2 L^2}{S_2}\sum_{k=1}^{K_0}\sum_{i=1}^n \mathbb{E}\|y_{k,i} - y_{k-1,i}\|^2 + \frac{16K\eta^2\sigma^2}{qS_1}.
	\end{align}
	If $i=1$, we have
	\begin{align}\label{bound_of_differece_y1}
	    \mathbb{E} \|y_{k,1} - y_{k-1,1}\|^2 = n\mathbb{E} \left\|\frac{X_k \mathbf{1}_n}{n} - \frac{X_{k-1} \mathbf{1}_n}{n} \right\|^2 \leq n\eta^2 \mathbb{E}\left \|\frac{V_{k-1} \mathbf{1}_n}{n} \right\|^2.
	\end{align}
	If $i\not =1$, we have 
	\begin{align}\label{sum_of_difference_yk}
	\sum_{k=1}^{K-1}\mathbb{E}\|y_{k,i} - y_{k-1,i}\|^2 \leq &\sum_{k=1}^{K-1}2\mathbb{E}\|y_{k,i}\|^2 + \sum_{k=1}^{K-1}2\mathbb{E}\|y_{k-1,i}\|^2 \notag \\
	\leq &4\sum_{k=1}^{K} \mathbb{E}\|y_{k,i}\|^2.
	\end{align}
    Then, we have 
	\begin{align*}
	&\sum_{k=1}^{K}\sum_{i=2}^n \mathbb{E} \|y_{k,i}\|^2\\ \overset{a}{\leq} &2C_1\sum_{i=2}^n \mathbb{E}\|y_{1,i}\|^2 + 2C_2 \sum_{s=1}^{K-1}\sum_{i=2}^n \mathbb{E}\|g_{s,i}\|^2\\
	\leq & 2C_1\mathbb{E}\|Y_1\|^2_F + 2C_2 \left(\frac{6q\eta^2L^2}{S_2} \sum_{k=1}^{K-1}\sum_{i=1}^n \mathbb{E}\|y_{k,i} - y_{k-1,i}\|^2 + \frac{16K\eta^2 \sigma^2}{qS_1} \right)\\
	\overset{b}{\leq} &  2C_1\mathbb{E}\|Y_1\|^2_F + \frac{12C_2q\eta^4L^2 n}{S_2}\sum_{k=1}^{K-1} \mathbb{E}\left\|\frac{V_{k-1} \mathbf{1}_n}{n} \right\|^2 + \frac{48C_2q\eta^2L^2}{S_2} \sum_{k=1}^{K-1} \sum_{i=2}^n \mathbb{E}\|y_{k,i}\|^2 + \frac{32C_2K\eta^2 \sigma^2}{qS_1}\\
	\overset{c}{\leq} &2C_1\mathbb{E}\|Y_1\|^2_F + \frac{12C_2q\eta^4L^2 n}{S_2}\sum_{k=1}^{K-1} \mathbb{E}\left\|\frac{V_{k-1} \mathbf{1}_n}{n} \right\|^2 \\
	&+ \frac{48C_2q\eta^2L^2}{S_2} \sum_{k=1}^{K-1} \sum_{i=1}^n \mathbb{E}  \left\| \frac{X_k\mathbf{1}_n}{n} - x_{k,i}\right\|^2 + \frac{32C_2K\eta^2 \sigma^2}{qS_1},
	\end{align*}
	where in $\overset{a}{\leq}$, $\overset{b}{\leq}$, and $\overset{c}{\leq}$, we use \eqref{final_bound_of_sum_yk} for $\overset{a}{\leq}$, \eqref{bound_of_differece_y1} and \eqref{sum_of_difference_yk} for $\overset{b}{\leq}$, and \eqref{relation_of_x_y} for $\overset{c}{\leq}$.
	Therefore, using \eqref{relation_of_x_y}, we have 
	\begin{align*}
	    &\left(1- \frac{48C_2q\eta^2L^2}{S_2}\right)\sum_{k=0}^{K}\sum_{i=1}^n \mathbb{E}  \left\| \frac{X_k\mathbf{1}_n}{n} - x_{k,i}\right\|^2\\
	    \leq &2C_1\mathbb{E}\|X_1\|^2_F + \frac{12C_2q\eta^4L^2n}{S_2}\sum_{k=1}^{K-1} \mathbb{E}\left\|\frac{V_{k-1} \mathbf{1}_n}{n} \right\|^2 + \frac{32C_2K\eta^2 \sigma^2}{qS_1}\\
	    \leq & \frac{12C_2q\eta^4L^2n}{S_2}\sum_{k=1}^{K-1} \mathbb{E}\left\|\frac{V_{k-1} \mathbf{1}_n}{n} \right\|^2 + 2\eta^2 \left(\frac{16C_2K}{qS_1} + 3C_1n\right) \sigma^2 + 6C_1n\eta^2\zeta^2 + 6C_1n\eta^2 \|\nabla f(0)\|^2,
	\end{align*}
    where we can expand $\mathbb{E}\|X_1\|^2_F$ by this way.
	\begin{align*}
	    \mathbb{E}\|X_1\|^2_F &= \mathbb{E}\|(X_0 - \eta \partial F(X_0;\xi_0))W\|^2_F\\
	    & = \mathbb{E}\|X_0 - \eta \partial F(X_0;\xi_0)\|^2_F\\
	    & = \eta^2  \mathbb{E}\| \partial F(0;\xi_0)\|^2_F\\
	    & = \eta^2 \sum_{i=1}^n \mathbb{E}\| (\nabla F_i(0;\xi_0) - \nabla f_i(0)) +(\nabla f_i(0) - \nabla f(0)) + \nabla f(0)\|^2_F\\
	    & = 3n\eta^2 (\sigma^2 + \zeta^2 + \|\nabla f(0)\|^2)
	\end{align*}
\end{proof}

\setcounter{lemma}{0}

\begin{lemma}\label{lemma2}
	Under the Assumption \ref{assumption_SPIDER}, we have 
	\begin{align*}
	    &\frac{1}{K}\sum_{k=0}^{K-1} \mathbb{E} \left\|\frac{\partial f(X_k) \mathbf{1}_n}{n} - \frac{V_k \mathbf{1}_n}{n} \right\|^2\\
	    \leq& \frac{12C_1L^2 q}{KnDS_2}\mathbb{E}\|X_1\|^2_F + \left(\frac{72C_2\eta^4L^4q^2}{KDS^2_2} + \frac{3qL^2\eta^2}{KS_2}\right)\sum_{k=1}^{K-1} \mathbb{E}\left\|\frac{V_{k-1} \mathbf{1}_n}{n} \right\|^2 + \left(1+ \frac{192C_2L^2\eta^2}{nDS_1}\right)\frac{\sigma^2}{S_1}.
	\end{align*}
\end{lemma}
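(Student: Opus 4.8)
The plan is to reduce this gradient-estimation error to the consensus deviation already controlled by Lemma~\ref{deviation_SPIDER}, exploiting the recursive (SPIDER/SARAH-type) structure of the averaged estimator through a martingale identity. First I would record the key fact that $\frac{V_k\mathbf{1}_n}{n}=\frac{1}{n}\sum_{i=1}^{n}g_{k,i}=:\overline{G}_k$, which one checks by induction on $k$ using the two branches of the update rule \eqref{update_rule}, the recursion for $G_k$, and the averaged update $\frac{X_{k+1}\mathbf{1}_n}{n}=\frac{X_k\mathbf{1}_n}{n}-\eta\frac{V_k\mathbf{1}_n}{n}$. Fixing $k$ and setting $k_0=\lfloor k/q\rfloor q$, this unrolls to
$$\overline{G}_k=\overline{H}_{k_0}(X_{k_0})+\sum_{s=k_0+1}^{k}\bigl(\overline{H}_s(X_s)-\overline{H}_s(X_{s-1})\bigr),$$
where $\overline{H}_s(X)=\frac{1}{n}\sum_i\nabla F_i(x_i;\xi_{s,i})$ and every $s$ in the sum is a non-reset iterate, so the $\mathcal{S}^{(2)}$ batch is used. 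Subtracting the corresponding telescoping expansion of $\frac{\partial f(X_k)\mathbf{1}_n}{n}$ gives $e_k:=\frac{\partial f(X_k)\mathbf{1}_n}{n}-\frac{V_k\mathbf{1}_n}{n}=e_{k_0}+\sum_{s=k_0+1}^{k}\delta_s$, with $e_{k_0}$ the $S_1$-mini-batch error of the surrogate full gradient at the reset point and each $\delta_s$ the recursive-variance increment. The cases $k=0$ or $k$ a reset iterate are trivial ($e_k=e_{k_0}$, no sum).

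The second step is martingale orthogonality. With respect to the filtration generated by the samples, $X_s$ and $X_{s-1}$ are measurable before the iterate-$s$ draw, and $\mathbb{E}_{\xi_s}$ of each stochastic-gradient difference equals the corresponding true-gradient difference; hence $e_{k_0},\delta_{k_0+1},\delta_{k_0+2},\dots$ form a martingale-difference sequence, giving $\mathbb{E}\|e_k\|^2=\mathbb{E}\|e_{k_0}\|^2+\sum_{s=k_0+1}^{k}\mathbb{E}\|\delta_s\|^2$. This is the step that must be done carefully: it is precisely what prevents a spurious factor of $q$. Then, by convexity of $\|\cdot\|^2$, the bounded-variance assumption yields $\mathbb{E}\|e_{k_0}\|^2\le\sigma^2/S_1$, and convexity together with the average-Lipschitz assumption and the $\mathcal{S}^{(2)}$-batch variance reduction yields $\mathbb{E}\|\delta_s\|^2\le\frac{L^2}{nS_2}\mathbb{E}\|X_s-X_{s-1}\|_F^2$.

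Third, I would average over $k=0,\dots,K-1$ and count: each $\mathbb{E}\|X_s-X_{s-1}\|_F^2$ enters at most $q$ of the inner block sums, so
$$\frac{1}{K}\sum_{k=0}^{K-1}\mathbb{E}\|e_k\|^2\le\frac{\sigma^2}{S_1}+\frac{qL^2}{KnS_2}\sum_{s=1}^{K-1}\mathbb{E}\|X_s-X_{s-1}\|_F^2.$$
Fourth, I would bound the displacement: writing $x_{s,i}-x_{s-1,i}=(x_{s,i}-\bar{x}_s)-(x_{s-1,i}-\bar{x}_{s-1})+(\bar{x}_s-\bar{x}_{s-1})$ with $\bar{x}_s:=\frac{X_s\mathbf{1}_n}{n}$, using $\|a+b+c\|^2\le 3(\|a\|^2+\|b\|^2+\|c\|^2)$ and $\bar{x}_s-\bar{x}_{s-1}=-\eta\frac{V_{s-1}\mathbf{1}_n}{n}$, then summing over $i$ and $s$, so that $\sum_s\sum_i\mathbb{E}\|x_{s,i}-\bar{x}_s\|^2$ can be replaced, via Lemma~\ref{deviation_SPIDER} after dividing through by $D=1-48C_2q\eta^2L^2/S_2$, by a combination of $\mathbb{E}\|X_1\|_F^2$, $\sum_s\mathbb{E}\|\frac{V_{s-1}\mathbf{1}_n}{n}\|^2$, and $\sigma^2$. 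Collecting the three resulting contributions---$\frac{12C_1L^2q}{KnDS_2}\mathbb{E}\|X_1\|_F^2$, $\bigl(\frac{72C_2\eta^4L^4q^2}{KDS_2^2}+\frac{3qL^2\eta^2}{KS_2}\bigr)\sum_{k=1}^{K-1}\mathbb{E}\|\frac{V_{k-1}\mathbf{1}_n}{n}\|^2$, and $\bigl(1+\frac{192C_2L^2\eta^2}{nDS_1}\bigr)\frac{\sigma^2}{S_1}$---yields the claim.

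The main obstacle is bookkeeping rather than analysis: getting the martingale filtration exactly right so that orthogonality holds with no factor $q$, and then tracking the combinatorics of the nested block sums (which $s$ contribute to which $k$, and how the $q$-periodic resets interact with Lemma~\ref{deviation_SPIDER}) so that the constants $C_1$, $C_2$, $D$ land precisely as stated. The analytic ingredients themselves---the variance of a mini-batch estimate, average-Lipschitzness, and the already-proved consensus bound---are routine.
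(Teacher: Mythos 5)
Your proposal is correct and follows essentially the same route as the paper's proof: a martingale-orthogonality unrolling of the estimator error within each length-$q$ block, the $\sigma^2/S_1$ bound at reset iterates, the average-Lipschitz $L^2/(nS_2)$ bound on each increment, the three-term decomposition of $x_{s,i}-x_{s-1,i}$ into consensus deviations plus the averaged step $-\eta V_{s-1}\mathbf{1}_n/n$, and finally Lemma~\ref{deviation_SPIDER} divided through by $D$. If anything, your insistence on centering the increments before invoking the $1/S_2$ variance reduction is slightly more careful than the paper's write-up, which bounds the uncentered stochastic differences directly; the resulting constants are the same.
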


\begin{proof}

	Consider the term $\mathbb{E}\left\|\frac{\partial f(X_k) \mathbf{1}_n}{n} - \frac{V_k \mathbf{1}_n}{n} \right\|^2$.
	\begin{align*}
	&\mathbb{E} \left\|\frac{\partial f(X_k) \mathbf{1}_n}{n} - \frac{V_k \mathbf{1}_n}{n} \right\|^2 \\
	= & \mathbb{E} \left\|\frac{\partial f(X_k) \mathbf{1}_n}{n} - \frac{(-\frac{1}{\eta}(X_k - X_{k-1}) + (\partial F(X_k;\xi_k) - \partial F(X_{k-1};\xi_{k})))\mathbf{1}_n}{n}\right\|^2\\
	= & \mathbb{E} \left\|\frac{\partial f(X_k) \mathbf{1}_n}{n} - \frac{(\partial F(X_k;\xi_k) - \partial F(X_{k-1};\xi_{k}))\mathbf{1}_n}{n} +   \frac{\frac{1}{\eta}(X_k - X_{k-1})\mathbf{1}_n}{n}\right\|^2\\
	\leq & \mathbb{E}\left\|\frac{[\partial f(X_k) - \partial f(X_{k-1})] \mathbf{1}_n}{n} - \frac{[\partial F(X_k;\xi_k) - \partial F(X_{k-1};\xi_{k})]\mathbf{1}_n}{n} +   \frac{[\partial f(X_{k-1}) - V_{k-1}]\mathbf{1}_n}{n}\right\|^2\\
	\leq & \mathbb{E}\left\|\left\{[\partial f(X_k) - \partial f(X_{k-1})]  - [\partial F(X_k;\xi_k) - \partial F(X_{k-1};\xi_{k})]\right\}\frac{\mathbf{1}_n}{n}\right\|^2 \\
	&+ \mathbb{E}\left\|\frac{[\partial f(X_{k-1}) - V_{k-1}]\mathbf{1}_n}{n}\right\|^2\\
	\leq & \sum_{j=k_0+1}^{k}\mathbb{E}\left\|[\partial F(X_j;\xi_j) - \partial F(X_{j-1};\xi_{j})]\frac{\mathbf{1}_n}{n}\right\|^2 + \mathbb{E} \left\|\frac{\partial f(X_{k_0}) \mathbf{1}_n}{n} - \frac{V_{k_0} \mathbf{1}_n}{n} \right\|^2\\
	\leq & \frac{L^2}{nS_2}\sum_{j=k_0+1}^{k}\sum_{i=1}^{n}\mathbb{E}\left\|x_{j,i} - x_{j-1,i}\right\|^2 + \mathbb{E} \left\|\frac{\partial f(X_{k_0}) \mathbf{1}_n}{n} - \frac{V_{k_0} \mathbf{1}_n}{n} \right\|^2\\
	\leq &\sum_{j=k_0+1}^{k} \frac{L^2}{nS_2} \sum_{i=1}^n \mathbb{E}\left\|x_{j,i} - \frac{X_j\mathbf{1}_n}{n}- \left(x_{j-1,i} - \frac{X_{j-1,i}\mathbf{1}_n}{n} \right) + \left(\frac{X_k\mathbf{1}_n}{n} - \frac{X_{k-1}\mathbf{1}_n}{n}\right)\right\|^2\\
	&+ \mathbb{E}\left\|\frac{[\partial f(x_{k-1}) - V_{k-1}]\mathbf{1}_n}{n}\right\|^2\\
	\leq &\sum_{j=k_0+1}^{k} \frac{3L^2}{nS_2} \sum_{i=1}^n \left[\mathbb{E} \left\| \frac{X_j \mathbf{1}_n}{n} - x_{j,i} \right\|^2 + \mathbb{E} \left\| \frac{X_{j-1} \mathbf{1}_n}{n} - x_{j-1,i} \right\|^2 + \mathbb{E} \left\| \frac{X_j \mathbf{1}_n}{n} - \frac{X_{j-1} \mathbf{1}_n}{n} \right\|^2 \right] \\
	&+ \mathbb{E} \left\|\frac{\partial f(X_{k_0}) \mathbf{1}_n}{n} - \frac{V_{k_0} \mathbf{1}_n}{n} \right\|^2.
	\end{align*}
	Summing from $k=k_0$ to $k = K-1$, we have
	\begin{align*}
	&\sum_{k=k_0}^{K-1}\mathbb{E} \left\|\frac{\partial f(X_k) \mathbf{1}_n}{n} - \frac{V_k \mathbf{1}_n}{n} \right\|^2\\
	\leq & \sum_{k=k_0+1}^{K-1} \sum_{j=k_0+1}^{k} \frac{3L^2}{nS_2} \sum_{i=1}^n \left[\mathbb{E} \left\|\frac{X_j \mathbf{1}_n}{n} - x_{j,i} \right\|^2 + \mathbb{E} \left\| \frac{X_{j-1} \mathbf{1}_n}{n} - x_{j-1,i} \right\|^2 + \mathbb{E} \left\| \frac{X_j \mathbf{1}_n}{n} - \frac{X_{j-1} \mathbf{1}_n}{n} \right\|^2 \right] \\
	&+ \sum_{k=k_0}^{K-1} \mathbb{E} \left\|\frac{\partial f(X_{k_0}) \mathbf{1}_n}{n} - \frac{V_{k_0} \mathbf{1}_n}{n} \right\|^2\\
	\leq & \frac{6L^2q}{nS_2}  \sum_{k=k_0}^{K-1} \sum_{i=1}^n \mathbb{E} \left\| \frac{X_k \mathbf{1}_n}{n} - x_{k,i} \right\|^2 + \frac{3qL^2\eta^2}{S_2} \sum_{k=k_0}^{K-1} \mathbb{E} \left\|\frac{V_k \mathbf{1}_n}{n} \right\|^2 \\
	&+ (K-k_0) \mathbb{E} \left\|\frac{\partial f(X_{k_0}) \mathbf{1}_n}{n} - \frac{V_{k_0} \mathbf{1}_n}{n} \right\|^2.
	\end{align*}
	Consider the term $\mathbb{E} \left\|\frac{\partial f(X_{k_0}) \mathbf{1}_n}{n} - \frac{V_{k_0} \mathbf{1}_n}{n} \right\|^2$,
    \begin{align*}
        &\mathbb{E} \left\|\frac{\partial f(X_{k_0}) \mathbf{1}_n}{n} - \frac{V_{k_0} \mathbf{1}_n}{n} \right\|^2\\
        =&\mathbb{E} \left\|\frac{\partial f(X_{k_0}) \mathbf{1}_n}{n} - \left[\frac{X_{k_0} - X_{k_0-1}}{-\eta} + \partial F(X_{k_0};\xi_{k_0}) - G_{k_0-1} \right]\frac{\mathbf{1}_n}{n} \right\|^2\\
        =&\mathbb{E}\left\|\left[\frac{\partial f(X_{k_0}) \mathbf{1}_n}{n} - \frac{\partial F(X_{k_0};\xi_{k_0})\mathbf{1}_n}{n}\right] - \left[\frac{X_{k_0} - X_{k_0-1}}{-\eta}  - G_{k_0-1} \right]\frac{\mathbf{1}_n}{n} \right\|^2\\
        =&\mathbb{E}\left\|\left[\frac{\partial f(X_{k_0}) \mathbf{1}_n}{n} - \frac{\partial F(X_{k_0};\xi_{k_0})\mathbf{1}_n}{n}\right] - \left[\frac{X_{k_0-1} - X_{k_0-2}}{-\eta} + G_{k_0-1} - G_{k_0-2}  - G_{k_0-1} \right]\frac{\mathbf{1}_n}{n} \right\|^2\\
        =&\mathbb{E}\left\|\left[\frac{\partial f(X_{k_0}) \mathbf{1}_n}{n} - \frac{\partial F(X_{k_0};\xi_{k_0})\mathbf{1}_n}{n}\right] - \left[\frac{X_{k_0-1} - X_{k_0-2}}{-\eta} - G_{k_0-2} \right]\frac{\mathbf{1}_n}{n} \right\|^2\\
        =&\mathbb{E}\left\|\left[\frac{\partial f(X_{k_0}) \mathbf{1}_n}{n} - \frac{\partial F(X_{k_0};\xi_{k_0})\mathbf{1}_n}{n}\right] - \left[\frac{X_{1} - X_{0}}{-\eta} - G_0\right]\frac{\mathbf{1}_n}{n} \right\|^2\\
        =&\mathbb{E}\left\|\left[\frac{\partial f(X_{k_0}) \mathbf{1}_n}{n} - \frac{\partial F(X_{k_0};\xi_{k_0})\mathbf{1}_n}{n}\right] \right\|^2\\
        \leq &\frac{1}{n}\sum_{i=1}^n \mathbb{E}\left\|\nabla f_i(x_{k_0,i}) - \nabla F_i(x_{k_0,i};\xi_{k_0,i}) \right\|^2 \\
        \leq &\frac{\sigma^2}{S_1}.
    \end{align*}	
	Therefore, we have 
	\begin{align*}
	&\sum_{k=0}^{K-1} \mathbb{E} \left\|\frac{\partial f(X_k) \mathbf{1}_n}{n} - \frac{V_k \mathbf{1}_n}{n} \right\|^2\\
	\overset{d}{\leq} & \frac{6L^2q}{nS_2}  \sum_{k=0}^{K-1} \sum_{i=1}^n \mathbb{E} \left\| \frac{X_k \mathbf{1}_n}{n} - x_{k,i} \right\|^2 + \frac{3qL^2\eta^2}{S_2} \sum_{k=0}^{K-1} \mathbb{E} \left\|\frac{V_k \mathbf{1}_n}{n}\right\|^2 + K \frac{\sigma^2}{S_1},
	\end{align*}
	where in $\overset{d}{\leq}$, we use $\mathbb{E}\left\|\frac{\partial f(X_{k_0}) \mathbf{1}_n}{n} - \frac{V_{k_0} \mathbf{1}_n}{n} \right\|^2\leq \frac{\sigma^2}{S_1}$.\\
	Applying Lemma \ref{lemma_add_sequence}, we have
	\begin{align*}
	    &\sum_{k=0}^{K-1} \mathbb{E} \left\|\frac{\partial f(X_k) \mathbf{1}_n}{n} - \frac{V_k \mathbf{1}_n}{n} \right\|^2\\
	    \leq & \frac{6L^2q}{nS_2}\left(\frac{2C_1}{D}\mathbb{E}\|X_1\|^2_F + \frac{12C_2q\eta^4L^2n}{DS_2}\sum_{k=1}^{K-1} \mathbb{E}\left\|\frac{V_{k-1} \mathbf{1}_n}{n} \right\|^2 + \frac{32C_2K\eta^2 \sigma^2}{DqS_1}\right) + \frac{3qL^2\eta^2}{S_2} \sum_{k=0}^{K-1} \mathbb{E} \left\|\frac{V_k \mathbf{1}_n}{n}\right\|^2 + K \frac{\sigma^2}{S_1}\\
	    \leq& \frac{12C_1L^2 q}{nDS_2}\mathbb{E}\|X_1\|^2_F + \left(\frac{72C_2\eta^4L^4q^2}{DS^2_2} + \frac{3qL^2\eta^2}{S_2}\right)\sum_{k=1}^{K-1} \mathbb{E}\left\|\frac{V_{k-1} \mathbf{1}_n}{n} \right\|^2 +\left(1+ \frac{192C_2L^2\eta^2}{nDS_1}\right)\frac{K\sigma^2}{S_1}.
	\end{align*}
	Therefore, we have
	\begin{align*}
	    &\frac{1}{K}\sum_{k=0}^{K-1} \mathbb{E} \left\|\frac{\partial f(X_k) \mathbf{1}_n}{n} - \frac{V_k \mathbf{1}_n}{n} \right\|^2\\
	    \leq& \frac{12C_1L^2 q}{KnDS_2}\mathbb{E}\|X_1\|^2_F + \left(\frac{72C_2\eta^4L^4q^2}{KDS^2_2} + \frac{3qL^2\eta^2}{KS_2}\right)\sum_{k=1}^{K-1} \mathbb{E}\left\|\frac{V_{k-1} \mathbf{1}_n}{n} \right\|^2 + \left(1+ \frac{192C_2L^2\eta^2}{nDS_1}\right)\frac{\sigma^2}{S_1}.
	\end{align*}
\end{proof}

\begin{theorem}\label{theorem3}
	For the on-line case, set the parameters $S_1$, $S_2$, $\eta$ and $q$. Then under the Assumption \ref{assumption_SPIDER}, for Algorithm DCSPIDER-SFO, we have 
	\begin{align*}
	&\frac{1}{K}\sum_{k=1}^K \mathbb{E}\left\|\nabla f\left(\frac{X_k \mathbf{1}_n}{n}\right)\right\|^2 + \frac{M}{K}\sum_{k=0}^{K-1}\mathbb{E}\left\| \frac{V_k \mathbf{1}_n}{n}\right\|^2\\
	\leq & \frac{2\mathbb{E}[f(\frac{X_0 \mathbf{1}_n}{n}) - f^*]}{\eta K}
	+  \left(1 + \frac{32C_2 L^2 \eta^2}{qS_2 D} + \frac{192C_2 L^2 \eta^2}{n S_1 S_2 D}\right)\frac{2\sigma^2}{S_1} \\
	&+ \frac{3\eta^2}{K}\left(\frac{4L^2C_1}{D} +  \frac{24L^2C_1 q}{D S_2}\right) (\sigma^2 + \zeta^2 + \|\nabla f(0)\|^2).
	\end{align*}
	where $D, C_2$ are defined in Lemma \ref{deviation_SPIDER} and $M:=\left(1 - L\eta -\frac{6qL^2\eta^2}{S_2} -  \frac{144 C_2 L^4 \eta^4 q^2}{D S_2^2} - \frac{24C_2 q\eta^4 L^4}{S_2 D}\right)$.
\end{theorem}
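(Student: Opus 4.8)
The plan is to run the usual ``descent lemma plus variance control'' argument on the \emph{averaged} iterate $\bar x_k:=\frac{X_k\mathbf 1_n}{n}$, which obeys the clean recursion $\bar x_{k+1}=\bar x_k-\eta\frac{V_k\mathbf 1_n}{n}$ recorded above. Concretely I would: (i) apply $L$-smoothness of $f$ at $\bar x_k$; (ii) split the resulting cross term with the polarization identity $-\langle a,b\rangle=\tfrac12\|a-b\|^2-\tfrac12\|a\|^2-\tfrac12\|b\|^2$; (iii) decompose the mismatch $\|\nabla f(\bar x_k)-\tfrac{V_k\mathbf 1_n}{n}\|^2$ into a consensus part (controlled by $\tfrac{L^2}{n}\sum_i\|\bar x_k-x_{k,i}\|^2$ via Jensen and the Lipschitz gradient) and the stochastic-estimator part $\|\tfrac{\partial f(X_k)\mathbf 1_n}{n}-\tfrac{V_k\mathbf 1_n}{n}\|^2$; (iv) telescope over $k=0,\dots,K-1$ and use $f(\bar x_K)\ge f^*$; (v) insert the consensus bound of Lemma~\ref{deviation_SPIDER} and the estimator bound of Lemma~\ref{lemma2}, substitute the closed form $\mathbb E\|X_1\|_F^2=3n\eta^2(\sigma^2+\zeta^2+\|\nabla f(0)\|^2)$ obtained at the end of the proof of Lemma~\ref{deviation_SPIDER}, and move every $\sum_k\mathbb E\|\tfrac{V_k\mathbf 1_n}{n}\|^2$ contribution to the left.

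\textbf{The one-step inequality.} By $L$-Lipschitzness of $\nabla f$ and $\bar x_{k+1}-\bar x_k=-\eta\frac{V_k\mathbf 1_n}{n}$, then polarizing the cross term, splitting $\|\nabla f(\bar x_k)-\tfrac{V_k\mathbf 1_n}{n}\|^2\le 2\|\nabla f(\bar x_k)-\tfrac{\partial f(X_k)\mathbf 1_n}{n}\|^2+2\|\tfrac{\partial f(X_k)\mathbf 1_n}{n}-\tfrac{V_k\mathbf 1_n}{n}\|^2$, and using $\|\nabla f(\bar x_k)-\tfrac{\partial f(X_k)\mathbf 1_n}{n}\|^2\le\tfrac{L^2}{n}\sum_i\|\bar x_k-x_{k,i}\|^2$, one obtains after taking expectations
\begin{align*}
\mathbb E f(\bar x_{k+1})\le{}&\mathbb E f(\bar x_k)-\tfrac\eta2\mathbb E\|\nabla f(\bar x_k)\|^2-\tfrac\eta2(1-L\eta)\,\mathbb E\Big\|\tfrac{V_k\mathbf 1_n}{n}\Big\|^2\\
&+\tfrac{\eta L^2}{n}\sum_i\mathbb E\|\bar x_k-x_{k,i}\|^2+\eta\,\mathbb E\Big\|\tfrac{\partial f(X_k)\mathbf 1_n}{n}-\tfrac{V_k\mathbf 1_n}{n}\Big\|^2 .
\end{align*}
Summing over $k=0,\dots,K-1$, telescoping, and using $f(\bar x_K)\ge f^*$ leaves $\tfrac\eta2\sum_k\mathbb E\|\nabla f(\bar x_k)\|^2+\tfrac\eta2(1-L\eta)\sum_k\mathbb E\|\tfrac{V_k\mathbf 1_n}{n}\|^2$ on the left, bounded above by $\mathbb E[f(\bar x_0)-f^*]+\tfrac{\eta L^2}{n}\sum_k\sum_i\mathbb E\|\bar x_k-x_{k,i}\|^2+\eta\sum_k\mathbb E\|\tfrac{\partial f(X_k)\mathbf 1_n}{n}-\tfrac{V_k\mathbf 1_n}{n}\|^2$.

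\textbf{Inserting the lemmas and forming $M$.} Divide Lemma~\ref{deviation_SPIDER} through by $D$ to bound $\sum_k\sum_i\mathbb E\|\bar x_k-x_{k,i}\|^2$, and use Lemma~\ref{lemma2} (multiplied by $K$) to bound $\sum_k\mathbb E\|\tfrac{\partial f(X_k)\mathbf 1_n}{n}-\tfrac{V_k\mathbf 1_n}{n}\|^2$; in both, replace $\mathbb E\|X_1\|_F^2$ by $3n\eta^2(\sigma^2+\zeta^2+\|\nabla f(0)\|^2)$. Each bound contributes a multiple of $\sum_k\mathbb E\|\tfrac{V_{k-1}\mathbf 1_n}{n}\|^2\le\sum_{k=0}^{K-1}\mathbb E\|\tfrac{V_k\mathbf 1_n}{n}\|^2$, namely $\tfrac{12C_2q\eta^5L^4}{DS_2}$ from the consensus term and $\tfrac{72C_2\eta^5L^4q^2}{DS_2^2}+\tfrac{3qL^2\eta^3}{S_2}$ from the estimator term. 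Transferring these to the left, the coefficient of $\sum_{k=0}^{K-1}\mathbb E\|\tfrac{V_k\mathbf 1_n}{n}\|^2$ becomes $\tfrac\eta2\big(1-L\eta-\tfrac{6qL^2\eta^2}{S_2}-\tfrac{144C_2L^4\eta^4q^2}{DS_2^2}-\tfrac{24C_2q\eta^4L^4}{S_2D}\big)=\tfrac\eta2 M$. Dividing the whole inequality by $\tfrac{\eta K}{2}$ turns the left side into $\tfrac1K\sum_k\mathbb E\|\nabla f(\bar x_k)\|^2+\tfrac MK\sum_k\mathbb E\|\tfrac{V_k\mathbf 1_n}{n}\|^2$, while the remaining constants reorganize into $\tfrac{2\mathbb E[f(\bar x_0)-f^*]}{\eta K}$, the $\tfrac{2\sigma^2}{S_1}$-type noise term, and the $\tfrac{3\eta^2}{K}\big(\tfrac{4L^2C_1}{D}+\tfrac{24L^2C_1q}{DS_2}\big)(\sigma^2+\zeta^2+\|\nabla f(0)\|^2)$ term --- exactly the claimed bound.

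\textbf{Main obstacle.} Conceptually the argument is routine; the real work is the constant bookkeeping in the last step --- tracking the $C_1,C_2,D$ factors produced by the two lemmas, recognizing which pieces merge into $M$ and which into the residual noise terms, and relying on Lemma~\ref{lemma2} for the fact that the snapshot remainder $\mathbb E\|\tfrac{\partial f(X_{k_0})\mathbf 1_n}{n}-\tfrac{V_{k_0}\mathbf 1_n}{n}\|^2$ telescopes, via the initialization $X_{-1}=X_0$, $G_{-1}=0$ (so that $V_0=G_0$), all the way back to at most $\sigma^2/S_1$. One also needs $D>0$ and $\eta$ small enough, as later imposed in \eqref{parameter_2}, for $M$ and the displayed coefficients to be well defined; apart from Lemma~\ref{deviation_SPIDER}, Lemma~\ref{lemma2}, and the $\|X_1\|_F^2$ identity, no further estimate is required.
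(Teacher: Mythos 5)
Your proposal follows essentially the same route as the paper's proof: the descent lemma on the averaged iterate, the polarization identity for the cross term, the split of the gradient mismatch into a consensus part and the estimator-error part, telescoping, insertion of Lemma \ref{deviation_SPIDER} and Lemma \ref{lemma2} with $\mathbb{E}\|X_1\|_F^2\leq 3n\eta^2(\sigma^2+\zeta^2+\|\nabla f(0)\|^2)$, and absorbing the $\sum_k\mathbb{E}\|V_k\mathbf{1}_n/n\|^2$ contributions into $M$ — and your tracked coefficients match the ones the paper obtains. This is correct and not a genuinely different argument.
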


\begin{proof}
\begin{align}\label{ineq:f_k}
    \mathbb{E} f\left(\frac{X_{k+1}\mathbf{1}_n}{n}\right) = &\mathbb{E} f\left(\frac{\left(X_{k}W - \eta V_k\right)\mathbf{1}_n}{n}\right) \notag \\
    \leq & \mathbb{E} f\left(\frac{X_{k}\mathbf{1}_n}{n}\right) - \eta \mathbb{E}\left\langle \nabla f\left(\frac{X_{k}\mathbf{1}_n}{n}\right), \frac{V_k \mathbf{1}_n}{n}\right\rangle +\frac{L\eta^2}{2} \mathbb{E}\left\|\frac{V_k \mathbf{1}_n}{n}\right\|^2 \notag \\
    = &\mathbb{E}f\left(\frac{X_{k}\mathbf{1}_n}{n}\right) - \frac{\eta}{2} \left(\mathbb{E}\left\|\nabla f\left(\frac{X_{k}\mathbf{1}_n}{n}\right)\right\|^2 + \mathbb{E}\left\| \frac{V_k \mathbf{1}_n}{n}\right\|^2 \right) \notag \\
    &+ \frac{\eta}{2}  \mathbb{E}\left\|\nabla f\left(\frac{X_{k}\mathbf{1}_n}{n}\right) - \frac{V_k\mathbf{1}_n}{n} \right\|^2  + \frac{L\eta^2}{2} \mathbb{E}\left\|\frac{V_k \mathbf{1}_n}{n}\right\|^2 \notag \\
	=&\mathbb{E}f\left(\frac{X_{k}\mathbf{1}_n}{n}\right) - \frac{\eta}{2}\mathbb{E}\left\|\nabla f\left(\frac{X_{k}\mathbf{1}_n}{n}\right)\right\|^2 + \frac{\eta\left(L\eta -1\right)}{2} \mathbb{E}\left\| \frac{V_k \mathbf{1}_n}{n}\right\|^2 \notag \\
	& + \frac{\eta}{2}\mathbb{E} \left\|\nabla f\left(\frac{X_{k}\mathbf{1}_n}{n}\right) - \frac{V_k \mathbf{1}_n}{n} \right\|^2.
\end{align}
	
We discuss the terms $\mathbb{E} \left\|\nabla f\left(\frac{X_{k}\mathbf{1}_n}{n}\right) - \frac{\partial f\left(X_k\right) \mathbf{1}_n}{n} \right\|^2$ and $\mathbb{E}\left\|\frac{V_k \mathbf{1}_n}{n} - \frac{\partial f\left(X_k\right) \mathbf{1}_n}{n} \right\|^2$.
    \begin{align}
	&\sum_{k=0}^{K-1} \mathbb{E} \left\|\nabla f\left(\frac{X_{k}\mathbf{1}_n}{n}\right) - \frac{\partial f\left(X_k\right) \mathbf{1}_n}{n} \right\|^2 \notag \\
	\leq & \sum_{k=0}^{K-1}\frac{1}{n}\sum_{i=1}^n \mathbb{E}\left\|\nabla f_i\left(\frac{X_{k}\mathbf{1}_n}{n}\right) - f_i\left(x_{k,i}\right)\right\|^2 \notag \\
	\leq & \sum_{k=0}^{K-1} \frac{L^2}{n}\sum_{i=1}^n \mathbb{E}\left\|\frac{X_{k}\mathbf{1}_n}{n} - x_{k,i}\right\|^2 \notag \\
	\overset{e}{\leq} & \frac{2L^2C_1\mathbb{E}\|X_1\|^2_F}{Dn} + \frac{12C_2q\eta^4 L^4 n}{S_2 Dn} \sum_{k=0}^{K-2} \mathbb{E} \left\|\frac{V_{k} \mathbf{1}_n}{n} \right\|^2 + \frac{32C_2 K\eta^2 \sigma^2 L^2}{qS_1Dn}\notag \\
	\leq & \frac{2L^2C_1\mathbb{E}\|X_1\|^2_F}{Dn} + \frac{12C_2q\eta^4 L^4 }{S_2 D} \sum_{k=0}^{K-1} \mathbb{E} \left\|\frac{V_{k} \mathbf{1}_n}{n} \right\|^2 + \frac{32C_2 K\eta^2 \sigma^2 L^2}{qS_1Dn},
	\end{align}
	where in $\overset{e}{\leq}$, we use Lemma \eqref{deviation_SPIDER} and $D = 1 - \frac{48C_2 q\eta^2L^2}{S_2}$.\\
	Applying Lemma \eqref{deviation_SPIDER} and Lemma \eqref{lemma2}, we have
	\begin{align}\label{ineq:variance_of_derivative}
	&\sum_{k=0}^{K-1} \mathbb{E}\left\|\nabla f\left(\frac{X_{k}\mathbf{1}_n}{n}\right) - \frac{V_k \mathbf{1}_n}{n} \right\|^2 \notag \\
	\leq &\sum_{k=0}^{K-1} 2 \mathbb{E} \left\|\nabla f\left(\frac{X_{k}\mathbf{1}_n}{n}\right) - \frac{\partial f\left(X_k\right) \mathbf{1}_n}{n} \right\|^2  + \sum_{k=1}^{K-1} 2\mathbb{E} \left\|\frac{\partial f(X_k) \mathbf{1}_n}{n} - \frac{V_k \mathbf{1}_n}{n} \right\|^2 \notag \\
	\leq &2\left( \frac{2L^2C_1\mathbb{E}\|X_1\|^2_F}{Dn} + \frac{12C_2q\eta^4 L^4 }{S_2 D} \sum_{k=0}^{K-1} \mathbb{E} \left\|\frac{V_{k} \mathbf{1}_n}{n} \right\|^2 + \frac{32C_2 K\eta^2 \sigma^2 L^2}{qS_1Dn}\right) \notag \\
	& + 2\left(\frac{6L^2q}{nS_2}  \sum_{k=1}^{K-1} \sum_{i=1}^n \mathbb{E} \left\| \frac{X_k \mathbf{1}_n}{n} - x_{k,i} \right\|^2 + \frac{3qL^2\eta^2}{S_2} \sum_{k=0}^{K-1} \mathbb{E} \left\|\frac{V_k \mathbf{1}_n}{n} \right\|^2 + K \frac{\sigma^2}{S_1} \right) \notag \\
	\overset{f}{\leq} &\left(\frac{4L^2C_1}{Dn} +  \frac{24L^2C_1 q}{D S_2n} \right) \mathbb{E}\|X_1\|^2_F + \left(\frac{6qL^2\eta^2}{S_2} +  \frac{144 C_2 L^4 \eta^4 q^2}{D S_2^2} + \frac{24C_2 q\eta^4 L^4}{S_2 D}\right)\sum_{k=0}^{K-1} \mathbb{E} \left\|\frac{V_k \mathbf{1}_n}{n} \right\|^2 \notag \\
	& + 2\left(1 + \frac{32C_2 L^2 \eta^2}{nq D} + \frac{192C_2 L^2 \eta^2}{n S_2 D}\right) \frac{K\sigma^2}{S_1},
	\end{align}
	where if $\overset{e}{\leq}$, we use \eqref{deviation_SPIDER} and $D = 1 - \frac{48C_2 q\eta^2L^2}{S_2}$.\\
    Summing \eqref{ineq:f_k} from $k=0$ to $k=K-1$, we have
    \begin{align*}
    & \sum_{k=0}^{K-1} \frac{\eta}{2}\mathbb{E}\left\|\nabla f\left(\frac{X_{k}\mathbf{1}_n}{n}\right)\right\|^2 + \sum_{k=0}^{K-1}\frac{\eta\left(1 - L\eta\right)}{2} \mathbb{E}\left\| \frac{V_k \mathbf{1}_n}{n}\right\|^2\\
    \leq & \sum_{k=0}^{K-1} \mathbb{E} \left[f\left(\frac{X_{k}\mathbf{1}_n}{n}\right) - f\left(\frac{X_{k+1}\mathbf{1}_n}{n}\right) \right] + \sum_{k=0}^{K-1}\frac{\eta}{2}\mathbb{E} \left\|\nabla f\left(\frac{X_{k}\mathbf{1}_n}{n}\right) - \frac{V_k \mathbf{1}_n}{n} \right\|^2.
    \end{align*}
    Using \eqref{ineq:variance_of_derivative}, it implies that
    \begin{align*}
     & \frac{\eta}{2} \sum_{k=0}^{K-1} \mathbb{E}\left\|\nabla f\left(\frac{X_{k}\mathbf{1}_n}{n}\right)\right\|^2 + \frac{\eta}{2}\left(1 - L\eta -\frac{6qL^2\eta^2}{S_2} -  \frac{144 C_2 L^4 \eta^4 q^2}{D S_2^2} - \frac{24C_2 q\eta^4 L^4}{S_2 D}\right) \sum_{k=0}^{K-1}\mathbb{E}\left\| \frac{V_k \mathbf{1}_n}{n}\right\|^2\\
     \leq & \mathbb{E}\left[f\left(\frac{X_{0}\mathbf{1}_n}{n}\right) - f^* \right] + \frac{\eta}{2}\left(\frac{4L^2C_1}{Dn} +  \frac{24L^2C_1 q}{D S_2n}\right) \mathbb{E}\|X_1\|^2_F + \frac{\eta}{2} \left(1 + \frac{32C_2 L^2 \eta^2}{nq D} + \frac{192C_2 L^2 \eta^2}{n S_2 D}\right)\frac{2K\sigma^2}{S_1}.
    \end{align*}
    Since $\mathbb{E}\|X_1\|^2_F \leq 3n\eta^2 (\sigma^2 + \zeta^2 + \|\nabla f(0)\|^2)$, we have 
    \begin{align*}
        & \frac{\eta}{2} \sum_{k=0}^{K-1} \mathbb{E}\left\|\nabla f\left(\frac{X_{k}\mathbf{1}_n}{n}\right)\right\|^2 + \frac{\eta}{2}\left(1 - L\eta -\frac{6qL^2\eta^2}{S_2} -  \frac{144 C_2 L^4 \eta^4 q^2}{D S_2^2} - \frac{24C_2 q\eta^4 L^4}{S_2 D}\right) \sum_{k=0}^{K-1}\mathbb{E}\left\| \frac{V_k \mathbf{1}_n}{n}\right\|^2\\
        \leq & \mathbb{E}\left[f\left(\frac{X_{0}\mathbf{1}_n}{n}\right) - f^* \right]  + \frac{\eta}{2} \left(1 + \frac{32C_2 L^2 \eta^2}{nq D} + \frac{192C_2 L^2 \eta^2}{n S_2 D}\right)\frac{2K\sigma^2}{S_1} \\
        &+ \frac{3 n\eta^3}{2}\left(\frac{4L^2C_1}{Dn} +  \frac{24L^2C_1 q}{D S_2n}\right) (\sigma^2 + \zeta^2 + \|\nabla f(0)\|^2).
    \end{align*}
    Let $M =\left(1 - L\eta -\frac{6qL^2\eta^2}{S_2} -  \frac{144 C_2 L^4 \eta^4 q^2}{D S_2^2} - \frac{24C_2 q\eta^4 L^4}{S_2 D}\right) $. We have
    \begin{align*}
	&\frac{1}{K}\sum_{k=1}^K \mathbb{E}\left\|\nabla f\left(\frac{X_k \mathbf{1}_n}{n}\right)\right\|^2 + \frac{M}{K}\sum_{k=0}^{K-1}\mathbb{E}\left\| \frac{V_k \mathbf{1}_n}{n}\right\|^2\\
	\leq & \frac{2\mathbb{E}[f(\frac{X_0 \mathbf{1}_n}{n}) - f^*]}{\eta K}
	+  \left(1 + \frac{32C_2 L^2 \eta^2}{nq D} + \frac{192C_2 L^2 \eta^2}{n  S_2 D}\right)\frac{2\sigma^2}{S_1} \\
	&+ \frac{3\eta^2}{K}\left(\frac{4L^2C_1}{D} +  \frac{24L^2C_1 q}{D S_2}\right) (\sigma^2 + \zeta^2 + \|\nabla f(0)\|^2).
	\end{align*}
\end{proof}

\subsection{proof of Corollary \ref{corollary: convergence rate}}
\begin{corollary}\label{corollary: convergence rate_appendix}
    Set the parameters $S_1 = \frac{\sigma^2}{\epsilon^2}, S_2 = \frac{\sigma}{\epsilon}, q = \frac{\sigma}{\epsilon}, \eta<\min\left(\frac{-1+\sqrt{13}}{12 L}, \frac{1}{4\sqrt{6C_2}L}\right)$ and $K = \lfloor \frac{l}{\epsilon^2}\rfloor + 1$. 
 Then under the Assumption  \ref{assumption_SPIDER}, running Algorithm D-SPIDER-SFO for $K$ iterations, we have
 $$ \frac{1}{K}\sum_{k=0}^{K-1}\mathbb{E} \left\|\nabla f\left(\frac{X_k\mathbf{1}}{n} \right)\right\|^2\leq 3\epsilon^2 + \frac{448 C_2L^2\eta^2\epsilon^3}{nD\sigma},$$
 where 
 \begin{align*}
     l &:= \frac{2\mathbb{E}[f(\frac{X_0\mathbf{1}_n}{n}) -f^*]}{\eta} + \frac{84C_1L^2\eta^2}{D}(\sigma^2 + \xi^2 + \|\nabla f(0)\|^2),
 \end{align*}
 The gradient cost is bounded by $2l\sigma\epsilon^{-3} + 2\sigma^2\epsilon^{-2}$.
\end{corollary}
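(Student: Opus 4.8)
The plan is to derive Corollary~\ref{corollary: convergence rate} directly from Theorem~\ref{3_SPIDER} by substituting the parameter choices \eqref{parameter_1}--\eqref{parameter_2} and simplifying term by term. Before substituting, I would record the consequences of the step-size bound \eqref{parameter_2} together with the relation $q = S_2 = \sigma/\epsilon$. Since $q/S_2 = 1$, the quantity $D$ of Lemma~\ref{lemma:variance_of_vk} becomes $D = 1 - 48 C_2 L^2\eta^2$, and the bound $\eta < 1/(4\sqrt{6C_2}L)$ gives $\eta^2 < 1/(96 C_2 L^2)$, hence $D > 1/2$; this makes every $1/D$ appearing in Theorem~\ref{3_SPIDER} an $\mathcal{O}(1)$ factor. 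Likewise, with $q/S_2 = 1$ and $D > 1/2$ the quantity $M$ of Theorem~\ref{3_SPIDER} collapses to $1 - L\eta - 6L^2\eta^2 - \frac{168 C_2 L^4\eta^4}{D}$, which (using $C_2 L^2\eta^2 < 1/96$) exceeds $1 - L\eta - \mathcal{O}(1)\,L^2\eta^2$; the role of the other bound $\eta < (-1+\sqrt{13})/(12L)$ is exactly to certify $M > 0$. That positivity is what lets me discard the nonnegative term $\frac{M}{K}\sum_k\mathbb{E}\|V_k\mathbf{1}_n/n\|^2$ from the left-hand side of Theorem~\ref{3_SPIDER}, so that $\frac1K\sum_k\mathbb{E}\|\nabla f(X_k\mathbf{1}_n/n)\|^2$ is bounded by the right-hand side alone.

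I would then evaluate the three summands on the right-hand side. From $S_1 = \sigma^2/\epsilon^2$ we get $2\sigma^2/S_1 = 2\epsilon^2$, and substituting $q = S_2 = \sigma/\epsilon$ turns the prefactor $1 + \frac{32C_2L^2\eta^2}{nqD} + \frac{192C_2L^2\eta^2}{nS_2D}$ times $2\epsilon^2$ into the leading $2\epsilon^2$ plus $\frac{64C_2L^2\eta^2\epsilon^3}{nD\sigma} + \frac{384C_2L^2\eta^2\epsilon^3}{nD\sigma} = \frac{448C_2L^2\eta^2\epsilon^3}{nD\sigma}$, which is exactly the $\epsilon^3$ term of the Corollary. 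For the first and third summands I would use $K = \lfloor l/\epsilon^2\rfloor + 1 \geq l/\epsilon^2$: with $q/S_2 = 1$ the third summand equals $\frac{84C_1L^2\eta^2}{KD}(\sigma^2 + \zeta_0^2 + \|\nabla f(0)\|^2)$, and adding the first summand $\frac{2\mathbb{E}[f(0) - f^*]}{\eta K}$ (using $X_0 = 0$) gives $l/K \leq \epsilon^2$, precisely because $l$ was defined as $\frac{2\mathbb{E}[f(0)-f^*]}{\eta} + \frac{84C_1L^2\eta^2}{D}(\sigma^2 + \zeta_0^2 + \|\nabla f(0)\|^2)$. Summing the three pieces yields $\frac1K\sum_{k=0}^{K-1}\mathbb{E}\|\nabla f(X_k\mathbf{1}/n)\|^2 \leq 3\epsilon^2 + \frac{448C_2L^2\eta^2\epsilon^3}{nD\sigma}$; here Theorem~\ref{3_SPIDER} is invoked with the index running over $k = 0,\dots,K-1$, which is what its proof actually establishes.

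For the gradient cost, I would split the $K$ iterations into the $\lceil K/q\rceil$ iterations with $\mathrm{mod}(k,q)=0$ --- each evaluating one minibatch gradient of size $S_1$ --- and the remaining at most $K$ iterations --- each evaluating two minibatch gradients of size $S_2$. The per-node gradient cost is therefore at most $\lceil K/q\rceil S_1 + 2KS_2 \le (K/q + 1)S_1 + 2KS_2$. Substituting $S_1 = \sigma^2/\epsilon^2$, $S_2 = q = \sigma/\epsilon$ and $K \le l/\epsilon^2 + 1$ makes this $\mathcal{O}(l\sigma\epsilon^{-3} + \sigma^2\epsilon^{-2})$, i.e.\ $\mathcal{O}(\epsilon^{-3})$ and in particular of the claimed order $2l\sigma\epsilon^{-3} + 2\sigma^2\epsilon^{-2}$.

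The term-by-term bookkeeping is routine; the one place that needs genuine care is checking that the two inequalities in \eqref{parameter_2} are simultaneously strong enough to (a) keep $D$ bounded away from $0$, so all $1/D$ factors stay $\mathcal{O}(1)$, and (b) keep $M$ strictly positive, so the velocity term can be dropped. Concretely, after setting $q/S_2 = 1$ one must verify that the quartic-in-$\eta$ contribution $\frac{168 C_2 L^4\eta^4}{D}$ to $M$ is controlled once $\eta^2 < 1/(96 C_2 L^2)$, and then that $\eta < (-1+\sqrt{13})/(12L)$ is the binding constraint making $1 - L\eta - (\text{quadratic-in-}\eta\text{ remainder}) \ge 0$. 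Everything else is substitution and elementary estimates.
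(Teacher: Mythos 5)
Your proposal is correct and follows essentially the same route as the paper: substitute the parameter choices into Theorem~\ref{3_SPIDER}, use the step-size bounds to get $D>1/2$ and $M>0$ so the velocity term can be dropped, combine the first and third summands into $l/K\leq\epsilon^2$, and extract the $448C_2L^2\eta^2\epsilon^3/(nD\sigma)$ term from the $2\sigma^2/S_1$ prefactor. The only (immaterial) divergence is in the gradient-cost bookkeeping, where you charge $2S_2$ evaluations per non-reset iteration while the paper charges $(q-1)S_2$ per period of length $q$, so your count gives roughly $3l\sigma\epsilon^{-3}$ rather than the stated $2l\sigma\epsilon^{-3}$; the $\mathcal{O}(\epsilon^{-3})$ order is unaffected.
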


\begin{proof}

Since $S_1 = \frac{\sigma^2}{\epsilon^2}, S_2 = \frac{\sigma}{\epsilon}, q = \frac{\sigma}{\epsilon}$ and $ \eta<\min\left(\frac{-1+\sqrt{13}}{12 L}, \frac{1}{8\sqrt{3C_2}L}\right) $, we have  
$$1-L\eta - 6qL^2\eta^2 >\frac{1}{2}$$ 
and $$\frac{1}{2} - 48C_2L^2\eta^2 + 48C_2 L^3\eta^3>0,$$ that is, $$1-L\eta -\frac{6qL^2\eta^2}{S_2}- 48C_2L^2\eta^2 + 48C_2 L^3\eta^3>0.$$
Since $1 - L\eta -\frac{6qL^2\eta^2}{S_2} -  \frac{144 C_2 L^4 \eta^4 q^2}{D S_2^2} - \frac{24C_2 q\eta^4 L^4}{S_2 D}>0$ equals to $1-L\eta -\frac{6qL^2\eta^2}{S_2}- 48C_2L^2\eta^2 + 48C_2 L^3\eta^3+120C_2L^4\eta^4>0$, we have $M>0$.
Therefore, we have 
\begin{align*}
	&\frac{1}{K}\sum_{k=1}^K \mathbb{E}\left\|\nabla f\left(\frac{X_k \mathbf{1}_n}{n}\right)\right\|^2 \\
	\leq & \frac{2\mathbb{E}[f(\frac{X_0 \mathbf{1}_n}{n}) - f^*]}{\eta K}
	+  \left(1 + \frac{32C_2 L^2 \eta^2}{n q D} + \frac{192C_2 L^2 \eta^2}{n S_2 D}\right)\frac{2\sigma^2}{S_1} \\
	&+ \frac{3\eta^2}{K}\left(\frac{4L^2C_1}{D} +  \frac{24L^2C_1 q}{D S_2}\right) (\sigma^2 + \zeta^2 + \|\nabla f(0)\|^2).
\end{align*}
Let $l =\left(3\eta^2\left(\frac{4L^2C_1}{D} +  \frac{24L^2C_1 q}{D S_2}\right) (\sigma^2 + \zeta^2 + \|\nabla f(0)\|^2) + \frac{2\mathbb{E}[f(\frac{X_0 \mathbf{1}_n}{n}) - f^*]}{\eta}\right) $ and $K = \left\lfloor \frac{l}{\epsilon^2}\right\rfloor +1$. We have
$$\frac{1}{K}\sum_{k=0}^{K-1}\mathbb{E} \left\|\nabla f\left(\frac{X_k\mathbf{1}}{n} \right)\right\|^2\leq 3\epsilon^2 + \frac{448 C_2L^2\eta^2\epsilon^3}{nD\sigma},$$
 where 
 \begin{align*}
     l &:= \frac{2\mathbb{E}[f(\frac{X_0\mathbf{1}_n}{n}) -f^*]}{\eta} + \frac{84C_1\eta^2L^2(\sigma^2 + \zeta^2 + \|\nabla f(0)\|^2)}{D}.
 \end{align*}
Finally, we compute the gradient cost for finding an $\epsilon$-approximated first-order stationary point.
\begin{align*}
    \left(\left\lfloor \frac{K}{q} \right\rfloor + 1\right)\cdot((q-1)S_2 + S_1)\leq &\frac{K}{q}(q-1)S_2 + \frac{KS_1}{q} + (q-1)S_2 + S_1\\
    \leq & KS_2 - \frac{KS_2}{q} + \frac{KS_1}{q} + qS_2 -S_2 + S_1\\
    \leq & \left(\frac{l}{\epsilon^2} +1 \right)\frac{\sigma}{\epsilon} - \frac{l}{\epsilon^2} + \frac{l\sigma}{\epsilon^3} + \frac{\sigma}{\epsilon} + \frac{\sigma^2}{\epsilon^2} - \frac{\sigma}{\epsilon} + \frac{\sigma^2}{\epsilon^2}\\
    \leq \frac{2l\sigma}{\epsilon^3}+ \frac{2\sigma^2}{\epsilon^2} + \frac{\sigma}{\epsilon} -\frac{l}{\epsilon^2}& ,
\end{align*}
i.e., the gradient cost is bounded by $\mathcal{O}(\frac{2l\sigma}{\epsilon^3}+ \frac{2\sigma^2}{\epsilon^2} + \frac{\sigma}{\epsilon})$
We complete the proof of the computation complexity of D-SPIDER-SFO.\\
\end{proof}

\newpage
\subsection{Experiments}
\begin{figure*}[ht]
	\centering  
		\includegraphics[width=0.6\textwidth]{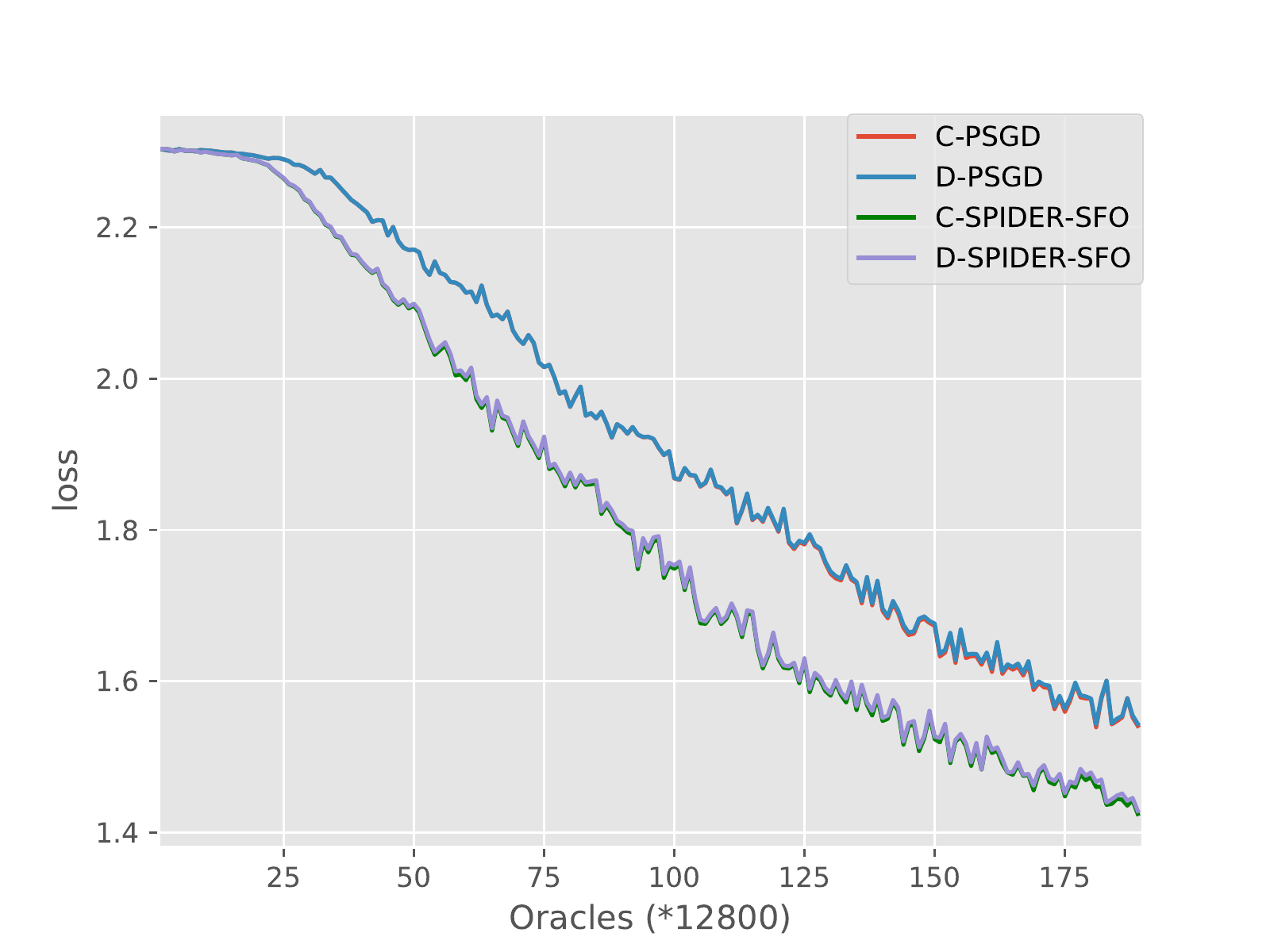}
	\caption{\ref{Fig.main3} is the experiments on CPU cluster with 8 nodes.}
	\label{Fig.main3}
\end{figure*}
\textbf{Hyper-parameters:} For D-PSGD and C-PSGD, we use minibatch of size 128, that is, minibatch of size 16 for each node and tune the constant learning rate $\frac{\eta_0}{\sqrt{K/n}}$. For D-SPIDER-SFO and C-SPIDER-SFO, we set $S_1=256$, $S_2 = 16$, $q=16$ for each node and tune the learning rate for $\{0.1,0.05,0.01,0.005,0.001\}$.

\end{document}